\DeclareMathOperator*{\argmin}{arg\,min} 
\DeclareMathOperator{\Gaussian}{\mathcal{N}} 
\DeclareMathOperator{\Proba}{\mathbb{P}} 
\DeclareMathOperator{\Reals}{\mathbb{R}} 
\newcommand{\abs}[1]{\left\lvert#1\right\rvert} 
\newcommand{\defeq}{\vcentcolon =} 
\newcommand{\Diag}{\mathrm{Diag}}
\newcommand{\diff}{\mathop{}\mathopen{}\mathrm{d}} 
\newcommand{\Dim}{d}
\newcommand{\exps}[1]{\mathrm{e}^{#1}}
\newcommand{\Expec}{\mathbb{E}}
\newcommand{\expec}[1]{\Expec\left[#1\right]} 
\newcommand{\Erfun}{\mathrm{erf}}
\newcommand{\erfun}[1]{\Erfun\left(#1\right)}
\newcommand{\Id}{\mathrm{I}}
\newcommand{\indic}[1]{\mathbf{1}_{#1}} 
\newcommand{\frobnorm}[1]{\norm{#1}_{\mathrm{F}}}
\newcommand{\gaussian}[2]{\Gaussian\left(#1,#2\right)}
\newcommand{\norm}[1]{\left\lVert#1\right\rVert}
\renewcommand{\epsilon}{\varepsilon}
\DeclareRobustCommand{\proba}[1]{\ensuremath{\Proba\left (#1\right )}} 
\numberwithin{equation}{section}
\theoremstyle{definition}
\newtheorem{remark}{Remark}
\numberwithin{remark}{section}
\newtheorem{hypo}{H\!\!}
\theoremstyle{plain}
\newtheorem{proposition}{Proposition}
\numberwithin{proposition}{section}
\newtheorem{lemma}{Lemma}
\numberwithin{lemma}{section}
\newtheorem{theorem}{Theorem}
\numberwithin{theorem}{section}
\newtheorem{corollary}{Corollary}
\numberwithin{corollary}{section}
\newcommand{\alphacst}{A_{\Dim}}
\newcommand{\betahat}{\widehat{\beta}}
\newcommand{\cst}{C_\Dim}
\newcommand{\empquantile}{\hat{q}}
\newcommand{\Empdisc}{\hat{\phi}}
\newcommand{\Empundisc}{\hat{\psi}}
\newcommand{\fhat}{\hat{f}}
\newcommand{\Gammahat}{\widehat{\Gamma}}
\newcommand{\mutilde}{\tilde{\mu}}
\newcommand{\nucrit}{V_{\text{crit}}}
\newcommand{\reg}[1]{\Omega\left(#1\right)}
\newcommand{\opnorm}[1]{\norm{#1}_{\text{op}}}
\newcommand{\Sigmahat}{\widehat{\Sigma}}
\newcommand{\Sigmahatinv}{\Sigmahat^{-1}}
\newcommand{\Sigmainv}{\Sigma^{-1}}
\newcommand{\sigmatilde}{\tilde{\sigma}}
\newcommand{\tlime}{\texttt{TabularLIME}\xspace}
\def\th@plain{%
	\thm@notefont{}
	\itshape 
}
\def\th@definition{%
	\thm@notefont{}
	\normalfont 
}
\begin{document}

\runningtitle{Explaining the explainer}

\twocolumn[

\aistatstitle{Explaining the Explainer: A First Theoretical Analysis of LIME}

\aistatsauthor{Damien Garreau$^{1,3}$ \\ \texttt{damien.garreau@unice.fr} \And Ulrike von Luxburg$^{1,2}$ \\ \texttt{ulrike.luxburg@uni-tuebingen.de}}
\vspace{0.2cm}
\aistatsaddress{$^1$Max Planck Institute for Intelligent Systems, Germany \\ $^2$University of T\"ubingen, Germany \\ $^3$Universit\'e C\^ote d'Azur, Inria, CNRS, LJAD, France}]

\begin{abstract}
Machine learning is used more and more often for sensitive applications, sometimes replacing humans in critical decision-making processes. 
As such, interpretability of these algorithms is a pressing need. One popular algorithm to provide interpretability is LIME (Local Interpretable Model-Agnostic Explanation). 
In this paper, we provide the first theoretical analysis of LIME. We derive closed-form expressions for the coefficients of the interpretable model when the function to explain is linear. The good news is that these coefficients are proportional to the gradient of the function to explain: LIME indeed discovers meaningful features. However, our analysis also reveals that poor choices of parameters can lead LIME to miss important features.  
\end{abstract}

\section{Introduction}
\label{sec:introduction}

\subsection{Interpretability}
\label{sec:interpretability}

The recent advance of machine learning methods is partly due to the widespread use of very complicated models, for instance deep neural networks. 
As an example, the Inception Network \citep{Sze_Liu_Jia:2015} depends on approximately $23$ million parameters. 
While these models achieve and sometimes surpass human-level performance on certain tasks (image classification being one of the most famous), they are often perceived as \emph{black boxes}, with little understanding of how they make individual predictions. 

This lack of understanding is a problem for several reasons. 
First, it can be a source of catastrophic errors when these models are deployed \emph{in the wild}. 
For instance, for any safety system recognizing cars in images, we want to be absolutely certain that the algorithm is using features related to cars, and not exploiting some artifacts of the images. 
Second, this opacity prevents these models from being \emph{socially accepted}. 
It is important to get a basic understanding of the decision making process to accept it. 

Model-agnostic explanation techniques aim to solve this interpretability problem by providing qualitative or quantitative help to understand how black-box algorithms make decisions. 
Since the global complexity of the black-box models is hard to understand, they often rely on a \emph{local} point of view, and produce an interpretation for a specific instance. 
In this article, we focus on such an explanation technique: \textbf{Local Interpretable Model-Agnostic Explanations} (LIME, \citet{Rib_Sin_Gue:2016}). 

\begin{figure}[ht!]
	\vspace{-.1in}
	\begin{center}
		\includegraphics[scale=0.27]{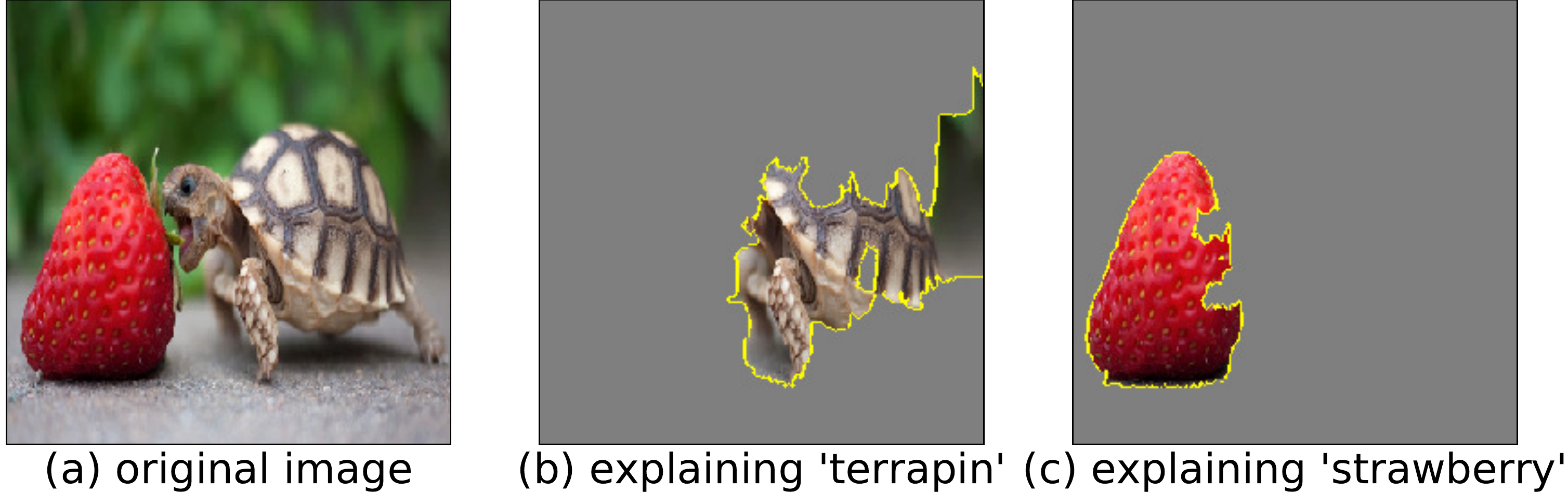}
	\end{center}
	\vspace{-.1in}
	\caption{\label{fig:lime-for-images}LIME explanation for object identification in images. We used Inception \citep{Sze_Liu_Jia:2015} as a black-box model. Terrapin, a sort of turtle, is the top label predicted for the image in panel~(a). 
	Panel~(b) shows the results of LIME, explaining how this prediction was made. 
	The highlighted parts of the image are the superpixels with the top coefficients in the surrogate linear model. 
	We ran the same experiment for the `strawberry' label in panel~(c). 
	}
\end{figure}


\subsection{Contributions}
\label{sec:contributions}

Our main goal in this paper is to provide theoretical guarantees for LIME. 
On the way, we shed light on some interesting behavior of the algorithm in a simple setting. 
Our analysis is based on the Euclidean version of LIME, called ``tabular LIME.'' 
Our main results are the following:
\begin{enumerate}[noitemsep,topsep=0pt]
	\item When the model to explain is linear, we \textbf{compute in closed-form} the average coefficients of the surrogate linear model obtained by \tlime. 
	\item In particular, these coefficients are \textbf{proportional to the partial derivatives of the black-box model} at the instance to explain. This implies that \tlime indeed highlights important features. 
	\item On the negative side, using the closed-form expressions we show that \textbf{it is possible to make some important features disappear} in the interpretation, just by changing a parameter of the method. 
	\item We also compute the local error of the surrogate model, and show that it is \textbf{bounded} away from~$0$ in general. 
\end{enumerate}

We explain how \tlime works in more details in Section~\ref{sec:lime-outline}. 
In Section~\ref{sec:main-results}, we state our main results. 
They are discussed in Section~\ref{sec:discussion}, and we provide an outline of the proof of our main result in Section~\ref{sec:proofs}. 
We conclude in Section~\ref{sec:conclusion}. 


\section{LIME: Outline and notation}
\label{sec:lime-outline}

\subsection{Intuition}
\label{sec:intuition}

From now on, we will consider a particular model encoded as a function $f:\Reals^\Dim \to\Reals$ and a particular instance $\xi\in\Reals^\Dim$ to explain. 
We make no assumptions on this function, \emph{e.g.}, how it might have been learned. 
We simply consider~$f$ as a black-box model giving us predictions for all points of the input space. 
Our goal will be to explain the decision $f(\xi)$ that this model makes for one particular instance~$\xi$. 

As soon as~$f$ is too complicated, it is hopeless to try and fit an interpretable model globally, since the interpretable model will be too simple to capture all the complexity of~$f$. 
Thus a reasonable course of action is to consider a \emph{local} point of view, and to explain~$f$ in the neighborhood of some fixed instance~$\xi$. 
This is the main idea behind LIME: To explain a decision for some fixed input $\xi$, sample other examples around~$\xi$, use these samples to build a simple interpretable model in the neighborhood of~$\xi$, and use this surrogate model to explain the decision for~$\xi$.  

One additional idea that makes a huge difference with other existing methods is to use \emph{discretized} features of smaller dimension $\Dim'$ to build the local model. 
These new categorical features are easier to interpret, since they are categorical. 
In the case of images, they are built by using a split of the image~$\xi$ into superpixels \citep{Ren_Mal:2003}. 
See Figure~\ref{fig:lime-for-images} for an example of LIME output in the case of image classification. 
In this situation, the surrogate model highlights the superpixels of the image that are the most ``active'' in predicting a given label. 

Whereas LIME is most famous for its results on images, it is easier to understand how it operates and to analyze theoretically on \textbf{tabular data}. 
In the case of tabular data, LIME works essentially in the same way, with a main difference: tabular LIME requires a train set, and each feature is discretized according to the empirical quantiles of this training set. 

\begin{figure}[ht]
	\vspace{-0.1in}
	\begin{center}
	\includegraphics[scale=0.25]{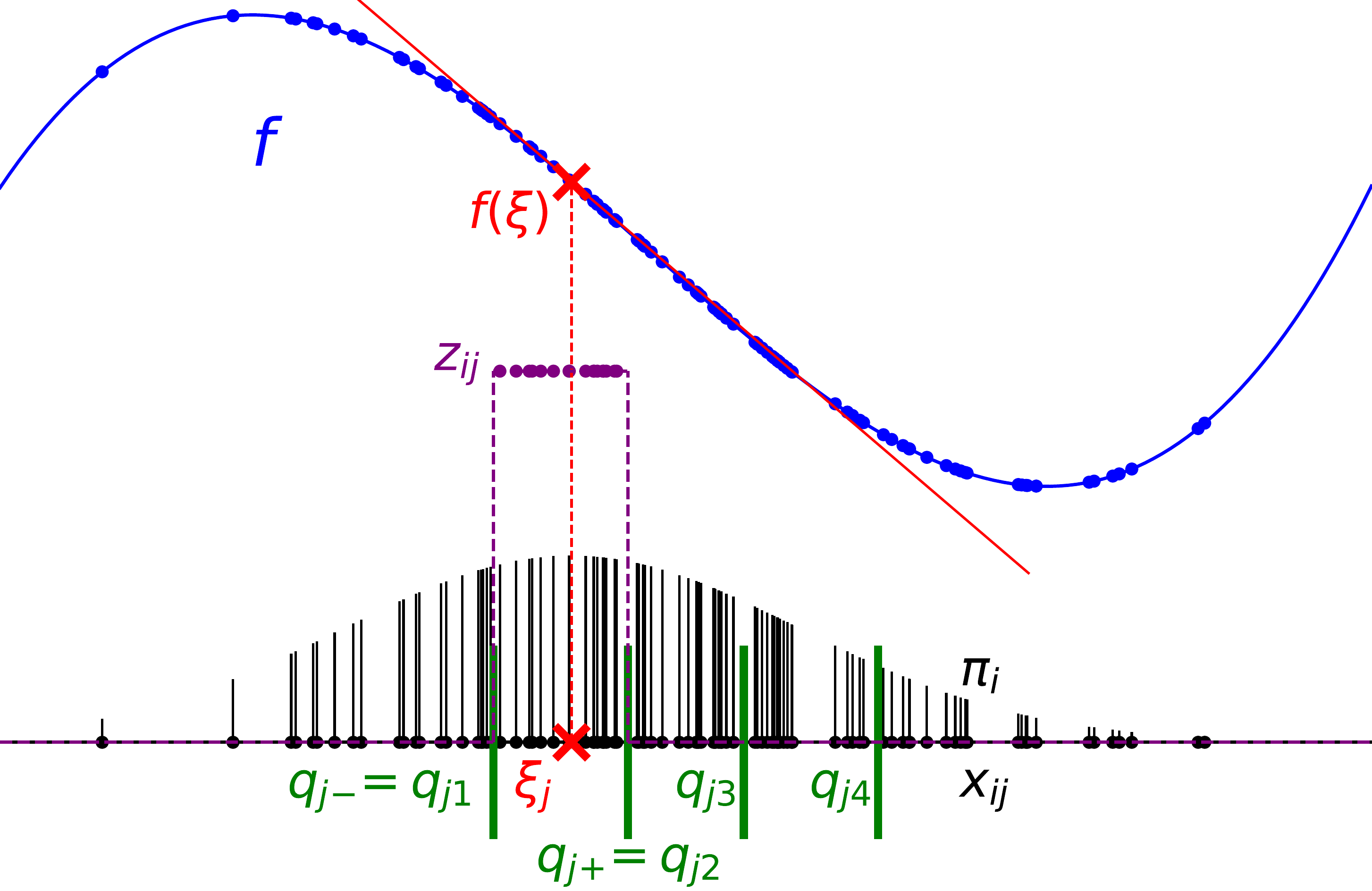}
	\end{center}
	\vspace{-0.2in}
	\caption{\label{fig:general_setting}General setting of \tlime along coordinate~$j$. Given a specific datapoint $\xi$ (in red), we want to build a local model for~$f$ (in blue), given new samples $x_1,\ldots,x_n$ (in black). 
	Discretizing with respect to the quantiles of the distribution (in green), these new samples are transformed into categorical features~$z_i$ (in purple). 
	In the construction of the surrogate model, they are weighted with respect to their proximity with~$\xi$ (here exponential weights given by Eq.~\eqref{eq:def-weights}, in black). In red, we plotted the tangent line, the best linear approximation one could hope for. 
	}
\end{figure}

We now describe the general operation of LIME on Euclidean data, which we call \tlime. 
We provide synthetic description of \tlime in Algorithm~\ref{algo:tlime}, and we refer to Figure~\ref{fig:general_setting} for a depiction of our setting along a given coordinate. 
Suppose that we want to explain the prediction of the model~$f$ at the instance~$\xi$. 
\tlime has an intricate way to sample points in a local neighborhood of~$\xi$.   
First, \tlime constructs empirical quantiles of the train set on each dimension, for a given number $p$ of bins. 
These quantile boxes are then used to construct a discrete representation of the data: if $\xi_j$ falls between $\empquantile_k$ and $\empquantile_{k+1}$, it receives the value~$k$. 
We now have a discrete version of~$\xi$, say $(2,3,\dots)^\top$. 
The next step is to sample discrete examples in $\{1,\ldots,p\}^\Dim$ uniformly at random: for instance, $(1,3,\ldots)^\top$ means that \tlime sampled an encoding such that the first coordinate falls into the first quantile box, the second coordinate into the third, etc. 
\tlime subsequently un-discretizes these encodings by sampling from a normal distribution truncated to the corresponding quantile boxes, obtaining \emph{new examples} $x_1,\ldots,x_n$. 
For example, for sample $(1,3,\ldots)^\top$ we now sample the first coordinate from a normal distribution restricted to quantile box~$\#1$, the second coordinate from quantile box~$\#3$, etc. 
This sampling procedure ensures that we have samples in each part of the space. 
The next step is to convert these sampled points to binary features, indicating for each coordinate if the new example falls into the same quantile box as~$\xi$. 
Here, $z_i$ would be $(1,0,\ldots)^\top$. 
Finally, an interpretable model (say linear) is learned using these binary features. 

\begin{algorithm}[ht]
	\caption{\tlime for regression}
	\label{algo:tlime}
	\begin{algorithmic}[1]
		\REQUIRE Model $f$, $\#$ of new samples $n$, instance~$\xi$, bandwidth $\nu$, $\#$ of bins~$p$, mean $\mu$, variance $\sigma^2$
		\STATE $q\leftarrow$ \texttt{GetQuantiles}($p$,$\mu$,$\sigma$)
		\STATE $t \leftarrow$ \texttt{Discretize}($\xi$,$q$)
		\FOR{$i=1$ to $n$}
		\FOR{$j=1$ to $\Dim$}
		\STATE $y_{i,j}\leftarrow$ \texttt{SampleUniform}($\{1,\ldots,p\}$)
		\STATE $(q_\ell,q_u)\leftarrow (q_{j,y_{ij}},q_{j,y_{ij}+1})$ 
		\STATE $x_{i,j}\leftarrow$ \texttt{SampleTruncGaussian}($q_\ell,q_u,\mu,\sigma$)
		\STATE $z_{i,j}\leftarrow \indic{t_j=y_{i,j}}$
		\ENDFOR
		\STATE $\pi_i\leftarrow \exp\left(\frac{-\norm{x_i-\xi}^2}{2\nu^2}\right)$
		\ENDFOR
		\STATE $\betahat\leftarrow$\texttt{WeightedLeastSquares}($z,f(x),\pi$)
		\RETURN $\betahat$
	\end{algorithmic}
\end{algorithm}

\subsection{Implementation choices and notation}
\label{sec:tabular-lime}

LIME is a quite general framework and leaves some freedom to the user regarding each brick of the algorithm. 
We now discuss each step of \tlime in more detail, presenting our implementation choices and introducing our notation on the way. 

\medskip

\textbf{Discretization. }
As said previously, the first step of \tlime is to create a partition of the input space using a train set. 
Intuitively, \tlime produces \emph{interpretable features} by discretizing each dimension. 
Formally, given a fixed number of bins~$p$, for each feature~$j$, the empirical quantiles $\empquantile_{j,0},\ldots,\empquantile_{j,p}$ are computed. 
Thus, along each dimension, there is a mapping $\Empdisc_j:\Reals \to \{1,\ldots,p\}$ associating each real number to the index of the quantile box it belongs to. 
For any point $x\in\Reals^\Dim$, the interpretable features are then defined as a $0-1$ vector corresponding to the discretization of~$x$ being the same as the discretization of~$\xi$. 
Namely, $z_{j}=\indic{\Empdisc_j(x)=\Empdisc_j(\xi)}$ for all $1\leq j\leq \Dim$. 
Intuitively, these categorical features correspond to the \emph{absence} or \emph{presence} of interpretable components. 
The discretization process makes a huge difference with respect to other methods: we lose the obvious link with the gradient of the function, and it is much more complicated to see how the local properties of~$f$ influence the result of the LIME algorithm, even in a simple setting. 
In all our experiments, we took $p=4$ (quartile discretization, the default setting).

\begin{figure}[h]
\vspace{-0.1in}
\begin{center}
\includegraphics[scale=0.30]{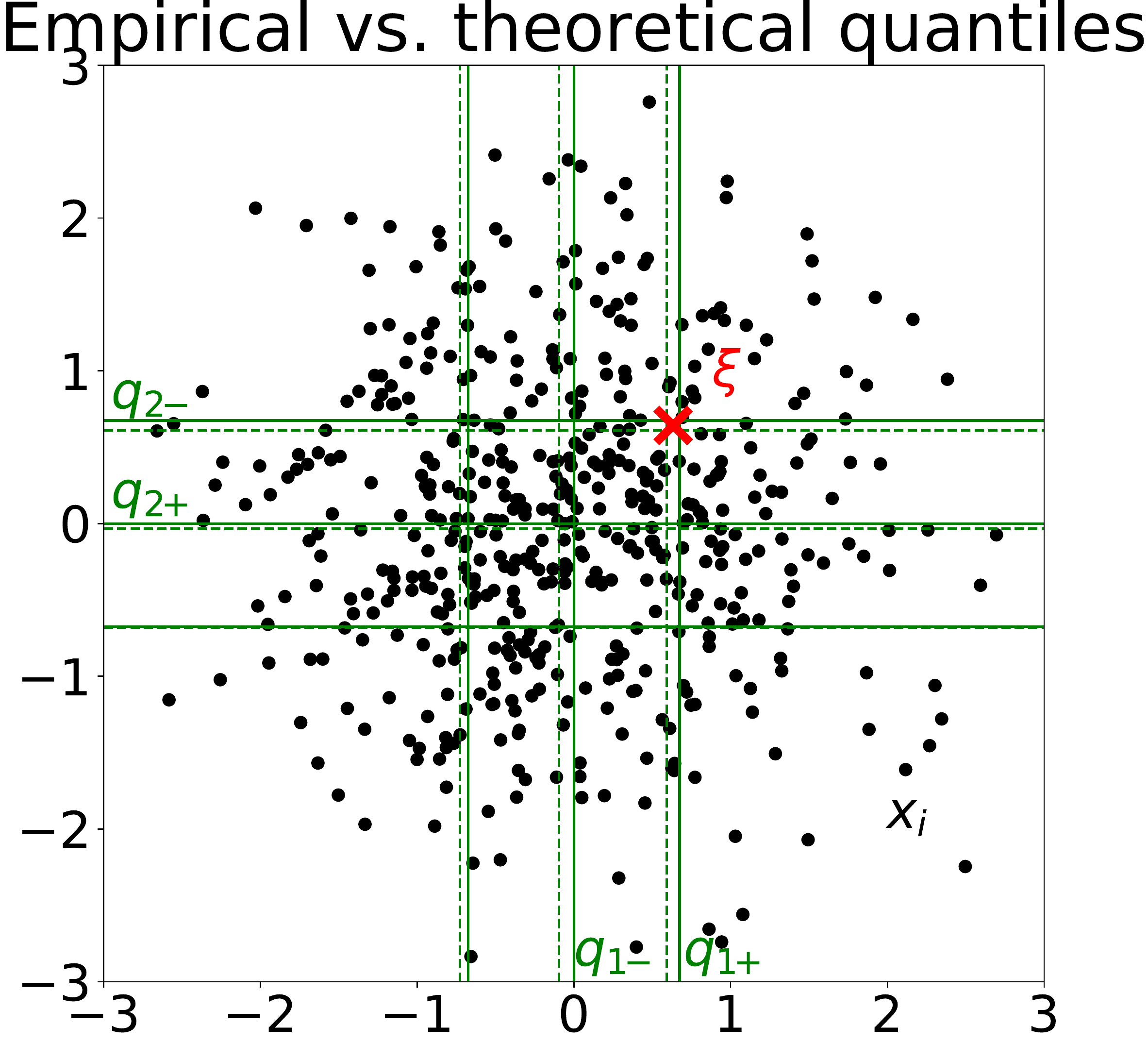}
\end{center}
\vspace{-0.1in}
\caption{\label{fig:quantiles_investigation}A visualization of the train set in dimension $\Dim=2$ with $\mu=(0,0)^\top$, and $\sigma^2=1$. The empirical quantiles (dashed green lines) are already very close to the theoretical quantiles (green lines) for $n_{\text{train}}=500$. 
The main difference in the procedure appears if~$\xi$ (red cross) is chosen at the edge of a quantile box, changing the way all the new samples are encoded. 
But for a train set containing enough observations and a generic~$\xi$, there is virtually no difference between using the theoretical quantiles and the empirical quantiles. 
}
\end{figure}

\textbf{Sampling strategy. }
Along with~$\Empdisc$, \tlime creates an un-discretization procedure $\Empundisc:\{1,\ldots,p\}\to\Reals$. 
Simply put, given a coordinate~$j$ and a bin index~$k$, $\Empundisc_j(k)$ samples a truncated Gaussian on the corresponding bin, with parameters computed from the training set. 
The \tlime sampling strategy for a new example amounts to (i) sample $y_i\in \{1,\ldots,p\}^d$ a random variable such that the $y_{ij}$ are independent samples of the discrete uniform distribution on $\{1,\ldots,p\}$, and (ii) apply the un-discretization step, that is, return $\Empundisc(y)$. 
We will denote by $x_1,\ldots,x_n\in\Reals^\Dim$ these new examples, and $z_1,\ldots,z_n\in\{0,1\}^\Dim$ their discretized counterparts. 
Note that it is possible to take other bin boxes than those given by the empirical quantiles, the $y_{ij}$s are then sampled according to the frequency observed in the dataset. 
The sampling step of \tlime helps to explore the values of the function in the neighborhood of the instance to explain. 
Thus it is not so important to sample according to the distribution of the data, and a Gaussian sampling that mimics it is enough. 

Assuming that we know the distribution of the train data, it is possible to use the theoretical quantiles instead of the empirical ones. 
For a large number of examples, they are arbitrary close (see, for instance, Lemma~21.2 in \citet{Van:2000}).
See Figure~\ref{fig:quantiles_investigation} for an illustration. 
It is this approach that we will take from now on: we denote the discretization step by~$\phi$ and denote the quantiles by $q_{jk}$ for $1\leq j\leq \Dim$ and $0\leq k\leq p$ to mark this slight difference. 
Also note that, for every $1\leq j\leq \Dim$, we set $q_{j\pm}$ the quantiles bounding $\xi_j$, that is, $q_{j-}\leq \xi_j < q_{j+}$ (see Figure~\ref{fig:general_setting}).

\paragraph{Train set. }
\tlime requires a train set, which is left free to the user. 
In spirit, one should sample according to the distribution of the train set used to fit the model~$f$. 
Nevertheless, this train set is rarely available, and from now on, we choose to consider draws from a $\gaussian{\mu}{\sigma^2\Id_{\Dim}}$. 
The parameters of this Gaussian can be estimated from the training data that was used for~$f$ if available. 
Thus, in our setting, along each dimension $j$, the $\left(q_{jk}\right)_{0\leq k\leq p}$ are the (rescaled) quantiles of the normal distribution. 
In particular, they are identical for all features. 
A fundamental consequence is that sampling the new examples $x_i$s first and then discretizing \textbf{has the same distribution} as sampling first the bin indices $y_{i}$s and then un-discretizing.   

\paragraph{Weights. }
We choose to give each example the weight
\begin{equation}
\label{eq:def-weights}
\pi_i \defeq \exp\left(\frac{-\norm{x_i-\xi}^2}{2\nu^2}\right)
\, ,
\end{equation}
where $\norm{\cdot}$ is the Euclidean norm on $\Reals^\Dim$ and $\nu > 0$ is a bandwidth parameter. 
It should be clear that~$\nu$ is a hard parameter to tune:
\begin{itemize}[noitemsep,topsep=0pt]
\item if~$\nu$ is very large, then \textbf{all the examples receive positive weights:}
we are trying to build a simple model that captures the complexity of~$f$ at a global scale. This cannot work if~$f$ is too complicated. 
\item if~$\nu$ is too small, then \textbf{only examples in the immediate neighborhood of~$\xi$} receive positive weights. Given the discretization step, this amounts to choosing $z_i=(1,\ldots,1)^\top$ for all $i$. Thus the linear model built on top would just be a constant fit, missing all the relevant information. 
\end{itemize}

Note that other distances than the Euclidean distance can be used, for instance the cosine distance for text data. 
The default implementation of LIME uses $\norm{z_i-t}$ instead of $\norm{x_i-\xi}$, with bandwidth set to $0.75\Dim$. 
We choose to use the true Euclidean distance between~$\xi$ and the new examples as it can be seen as a smoothed version of the distance to~$z_i$ and has the same behavior. 

\paragraph{Interpretable model. }

The final step in \tlime is to build a local interpretable model. 
Given a class of simple, interpretable models~$G$, \tlime selects the best of these models by solving 
\begin{equation}
\label{eq:tabular-lime-general}
\argmin_{g\in G} \biggl\{L_n(f,g,\pi_{\xi}) + \reg{g}\biggr\}
\, ,
\end{equation}
where $L_n$ is a local loss function evaluated on the new examples $x_1,\ldots,x_n$, and $\Omega:\Reals^\Dim\to\Reals$ is a regularizer function. 
For instance, a natural choice for the local loss function is the weighted squared loss 
\begin{equation}
\label{eq:local-loss}
L_n(f,g,\pi) \defeq \frac{1}{n}\sum_{i=1}^n \pi_i \left(f(x_i)-g(z_i)\right)^2
\, .
\end{equation}
We saw in Section~\ref{sec:interpretability} different possibilities for~$G$. 
In this paper, we will focus exclusively on the linear models, in our opinion the easiest models to interpret. 
Namely, we set $g(z_i) = \beta^\top z_i + \beta_0$, with $\beta\in\Reals^\Dim$ and $\beta_0\in\Reals$. 
To get rid of the intercept~$\beta_0$, we now use the standard approach to introduce a phantom coordinate~$0$, and $z,\beta\in\Reals^{\Dim+1}$ with $z_0 = 1$ and $\beta_0 = \beta_0$. 
We also stack the $z_i$s together to obtain $Z\in\{0,1\}^{n\times (\Dim +1)}$. 

The regularization term $\Omega(g)$ is added to insure further interpretability of the model by reducing the number of non-zero coefficients in the linear model given by \tlime.
Typically, one uses $L^2$ regularization (ridge regression is the default setting of LIME) or $L^1$ regularization (the Lasso). 
To simplify the analysis, we will set $\Omega = 0$ in the following. 
We believe that many of the results of Section~\ref{sec:main-results} stay true in a regularized setting, especially the switch-off phenomenon that we are going to describe below: coefficients are even more likely to be set to zero when $\Omega\neq 0$. 

In other words, in our case \tlime performs \emph{weighted linear regression} on the interpretable features $z_i$s, and outputs a vector $\betahat\in\Reals^{\Dim+1}$ such that  
\begin{equation}
\label{eq:tabular-lime-main}
\betahat \in \argmin_{\beta\in\Reals^{\Dim+1}} \left\{\frac{1}{n} \sum_{i=1}^n \pi_i (y_i - \beta^\top z_i)^2 \right\}
\, .
\end{equation}
Note that $\betahat$ is a random quantity, with randomness coming from the sampling of the new examples $x_1,\ldots,x_n$. 
It is clear that from a theoretical point of view, a big hurdle for the theoretical analysis is the discretization process (going from the~$x_i$s to the $z_i$s). 

\paragraph{Regression vs. classification. }
To conclude, let us note that \tlime can be used both for regression and classification. 
Here we focus on the \emph{regression} mode: the outputs of the model are real numbers, and not discrete elements. 
In some sense, this is a more general setting than the classification case, since the classification mode operates as \tlime for regression, but with $f$ chosen as the function that gives the likelihood of belonging to a certain class according to the model. 


\subsection{Related work}
\label{sec:related-work}

Let us mention a few other model-agnostic methods that share some characteristics with LIME. 
We refer to \citet{Gui_Mon_Rug:2019} for a thorough review. 

\paragraph{Shapley values. }
Following \citet{Sha:1953} the idea is to estimate for each subset of features~$S$ the expected prediction difference $\Delta(S)$ when the value of these features are \emph{fixed} to those of the example to explain. 
The contribution of the $j$th feature is then set to an average of the contribution of~$j$ over all possible coalitions (subgroups of features not containing~$j$).
They are used in some recent interpretability work, see \citet{Lun_Lee:2017} for instance. 
It is extremely costly to compute, and does not provide much information as soon as the number of features is high. 
Shapley values share with LIME the idea of quantifying how much a feature contributes to the prediction for a given example. 

\paragraph{Gradient methods. }
Also related to LIME, \emph{gradient-based} methods as in \citet{Bae_Sch_Har:2010} provide local explanations without knowledge of the model. 
Essentially, these methods compute the partial derivatives of~$f$ at a given example. 
For images, this can yield satisfying plots where, for instance, the contours of the object appear: a \emph{saliency map} \citep{Zei_Fer:2014}. 
\citet{Shr_Gre_Shc:2016,Shr_Gre_Kun:2017} propose to use the ``input $\times$ derivative'' product, showing advantages over gradient methods. 
But in any case, the output of these gradient based methods is not so interpretable since the number of features is so high. 
LIME gets around this problem by using a local dictionary with much smaller dimensionality than the input space. 


\section{Theoretical value of the coefficients of the surrogate model}
\label{sec:main-results}

We are now ready to state our main result. 
Let us denote by~$\betahat$ the coefficients of the linear surrogate model obtained by \tlime. 
In a nutshell, when the underlying model~$f$ is linear, we can derive the average value~$\beta$ of the $\betahat$ coefficients. 
In particular, we will see that the $\beta_j$s are proportional to the partial derivatives $\partial_j f(\xi)$. 
The exact form of the proportionality coefficients is given in the formal statement below, it essentially depends on the scaling parameters
\[
\mutilde \defeq \frac{\nu^2\mu + \sigma^2\xi}{\nu^2+\sigma^2}\in\Reals^\Dim\,\text{and}\,\, \sigmatilde \defeq \frac{\nu^2\sigma^2}{\nu^2+\sigma^2} >0
\, ,
\]
and the $q_{j\pm}$s, the quantiles left and right of the $\xi_j$s. 

\begin{theorem}[Coefficients of the surrogate model, theoretical values]
\label{th:main-result-light}
Assume that~$f$ is of the form $x\mapsto a^\top x + b$, and set 
\begin{equation}
\label{eq:beta}
\beta \defeq  \begin{pmatrix}
f(\mutilde) + \sum_{j=1}^\Dim \frac{a_j\theta_j}{1-\alpha_j} \\
\frac{-a_1\theta_1}{\alpha_1(1-\alpha_1)} \\
\vdots \\
\frac{-a_\Dim \theta_\Dim}{\alpha_\Dim(1-\alpha_\Dim)}
\end{pmatrix}
\in\Reals^{\Dim+1}
\, ,
\end{equation}
where, for any $1\leq j\leq \Dim$, we defined
\[
\alpha_j \defeq \left[\frac{1}{2}\erfun{\frac{x-\mutilde_j}{\sigmatilde\sqrt{2}}}\right]_{q_{j-}}^{q_{j+}}
\, ,
\]
and
\[
\theta_j \defeq \left[ \frac{\sigmatilde}{\sqrt{2\pi}}\exp\left(\frac{-(x-\mutilde_j)^2}{2\sigmatilde^2}\right)\right]_{q_{j-}}^{q_{j+}}
\, .
\]
Let $\eta \in (0,1)$. 
Then, with high probability greater than $1-\eta$, it holds that
\begin{equation*}
\norm{\betahat-\beta} \lesssim \max(\sigma\norm{\nabla f},f(\mutilde) + \sigmatilde\norm{\nabla f})\sqrt{\frac{\log 1/\eta}{n}}
\, .
\end{equation*}
\end{theorem}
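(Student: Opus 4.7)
The plan is to recognize that $\betahat$ is the solution of the normal equations, namely $\betahat = \Sigmahatinv \Gammahat$ with $\Sigmahat \defeq \frac{1}{n}\sum_i \pi_i z_i \trans{z_i}$ and $\Gammahat \defeq \frac{1}{n}\sum_i \pi_i z_i f(x_i)$, and to identify $\beta$ as the population-level counterpart $\Sigmainv\Gamma$, where $\Sigma \defeq \expec{\pi_1 z_1 \trans{z_1}}$ and $\Gamma \defeq \expec{\pi_1 z_1 f(x_1)}$. The strategy is then standard: compute $\Sigma$ and $\Gamma$ in closed form, solve $\Sigma\beta = \Gamma$ by hand, and conclude with concentration bounds combined with a matrix-perturbation argument.

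The key observation enabling the closed-form computation of $\Sigma$ and $\Gamma$ is the fact (already highlighted in Section~\ref{sec:tabular-lime}) that because the $q_{jk}$ are the theoretical quantiles of $\gaussian{\mu}{\sigma^2}$ and $y_{ij}$ is uniform on $\{1,\ldots,p\}$, the new samples satisfy $x_i \sim \gaussian{\mu}{\sigma^2 \Id_\Dim}$ with independent coordinates. Since $\pi_i$ also factorizes as $\prod_j \exp\bigl(-(x_{ij}-\xi_j)^2/(2\nu^2)\bigr)$, every entry of $\Sigma$ and $\Gamma$ decouples into one-dimensional integrals of the form $\int g(x) \exp\bigl(-(x-\mu_j)^2/(2\sigma^2) - (x-\xi_j)^2/(2\nu^2)\bigr)\diff x$. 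Completing the square in the exponent reveals the Gaussian density $\gaussian{\mutilde_j}{\sigmatilde^2}$, and the truncated integrals against $1$ and against $x$ yield precisely the quantities $\alpha_j$ and $\mutilde_j \alpha_j - \theta_j$ from the statement. In particular, one obtains $\expec{\pi_1} = c$, $\expec{\pi_1 z_{1j}} = c\alpha_j$, $\expec{\pi_1 z_{1j}z_{1k}} = c\alpha_j\alpha_k$ for $j\neq k$, $\expec{\pi_1 f(x_1)} = c f(\mutilde)$, and $\expec{\pi_1 z_{1j} f(x_1)} = c(\alpha_j f(\mutilde) - a_j \theta_j)$ where $c\defeq \expec{\pi_1}$.

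After dividing by $c$, the matrix $\Sigma$ takes the rank-one-plus-diagonal form $v\trans{v} + D$ with $v=(1,\alpha_1,\ldots,\alpha_\Dim)^{\top}$ and $D=\Diag(0,\alpha_1(1-\alpha_1),\ldots,\alpha_\Dim(1-\alpha_\Dim))$. One can then either apply Sherman--Morrison or simply eliminate row by row: substituting the row-$0$ equation $\beta_0 + \sum_k \alpha_k\beta_k = f(\mutilde)$ into each row $j\geq 1$ collapses everything to $\alpha_j(1-\alpha_j)\beta_j = -a_j\theta_j$, which gives $\beta_j$ as in \eqref{eq:beta} and in turn $\beta_0 = f(\mutilde) + \sum_j \frac{a_j\theta_j}{1-\alpha_j}$.

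For the probabilistic bound, I would bound $\opnorm{\Sigmahat - \Sigma}$ via Hoeffding's inequality (since $\pi_i z_i \trans{z_i}$ has entries in $[0,1]$) and $\norm{\Gammahat - \Gamma}$ via Bernstein or a concentration inequality for sub-Gaussian variables (since $f(x_i) = \trans{a}x_i + b$ is sub-Gaussian with scale $\sigma\norm{a}=\sigma\norm{\nabla f}$ and mean of order $f(\mu)$); this explains the prefactor $\max(\sigma\norm{\nabla f},\,f(\mutilde)+\sigmatilde\norm{\nabla f})$ and the $\sqrt{\log(1/\eta)/n}$ rate. Finally, I would write $\betahat - \beta = \Sigmahatinv\bigl[(\Gammahat-\Gamma) - (\Sigmahat - \Sigma)\beta\bigr]$ and control $\opnorm{\Sigmahatinv}$ by a standard perturbation argument around $\Sigma$. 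The main obstacle is precisely this last step: one needs a lower bound on $\lambdamin{\Sigma}$, which may degenerate when some $\alpha_j$ approaches $0$ or $1$ (i.e., when $\xi_j$ sits near the boundary of a quantile box or when $\sigmatilde$ is tiny); this delicate point is what ultimately determines the hidden constants in the $\lesssim$ and controls how close $n$ must be to dimension before the concentration kicks in.
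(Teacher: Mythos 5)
Your proposal is correct and follows essentially the same route as the paper: normal equations, closed-form Gaussian integrals for $\Sigma$ and $\Gamma$ that decouple across coordinates, explicit solution of $\Sigma\beta=\Gamma$, and Hoeffding/sub-Gaussian concentration combined with a perturbation bound on $\Sigmahatinv$. Your only departures are cosmetic: you solve for $\beta$ by row elimination on the rank-one-plus-diagonal form rather than via the block-inversion formula for $\Sigmainv$, and you use the decomposition $\betahat-\beta=\Sigmahatinv\bigl[(\Gammahat-\Gamma)-(\Sigmahat-\Sigma)\beta\bigr]$ instead of the paper's $\Sigmahatinv(\Gammahat-\Gamma)+(\Sigmahatinv-\Sigmainv)\Gamma$; both require the same lower bound on $\lambdamin{\Sigma}$, which the paper obtains from the explicit $\Sigmainv$ in terms of $\max_j 1/(\alpha_j(1-\alpha_j))$ --- exactly the degeneracy you flag.
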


A precise statement with the accurate dependencies in the dimension and the constants hidden in the result can be found in the Appendix (Theorem~\ref{th:main-result}).
Before discussing the consequences of Theorem~\ref{th:main-result-light} in the next section, remark that since $\xi$ is encoded by $(1,1,\ldots,1)^\top$, the prediction of the local model at $\xi$, $\fhat(\xi)$, is just the sum of the $\betahat_j$s. 
According to Theorem~\ref{th:main-result-light}, $\fhat(\xi)$ will be close to this value, with high probability. 
Thus we also have a statement about the error made by the surrogate model in~$\xi$.

\begin{corollary}[Local error of the surrogate model]
	\label{cor:local-model-error-light}
	Let $\eta \in (0,1)$. 
	Then, under the assumptions of Theorem~\ref{th:main-result-light}, with probability greater than $1-\eta$, it holds that
	\begin{align*}
	&\abs{\fhat(\xi) - f(\mutilde) +\sum_{j=1}^\Dim \frac{a_j\theta_j}{\alpha_j}} \leq \\
	&\phantom{blabla}\leq  \max(\sigma\norm{\nabla f},f(\mutilde) + \sigmatilde\norm{\nabla f})\sqrt{\frac{\log 1/\eta}{n}}
	\, ,
	\end{align*}
	with hidden constants depending on $\Dim$ and the $\alpha_j$s. 
\end{corollary}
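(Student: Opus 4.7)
The plan is to reduce the corollary to an application of Theorem~\ref{th:main-result-light}, exploiting the fact that the surrogate prediction at $\xi$ is nothing but the sum of all coefficients of $\betahat$. Indeed, by construction, $\xi$ gets encoded as the all-ones vector in the interpretable representation (together with the intercept coordinate $z_0 = 1$), so the predicted value at $\xi$ is $\fhat(\xi) = \sum_{j=0}^{\Dim} \betahat_j$. I would therefore compare this quantity to its theoretical counterpart $\sum_{j=0}^{\Dim} \beta_j$, with $\beta$ defined by Eq.~\eqref{eq:beta}.

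The next step is a short algebraic simplification of this sum. Reading off the components of~$\beta$ from the theorem statement, one finds
\begin{equation*}
\sum_{j=0}^{\Dim} \beta_j = f(\mutilde) + \sum_{j=1}^{\Dim} a_j\theta_j\left(\frac{1}{1-\alpha_j} - \frac{1}{\alpha_j(1-\alpha_j)}\right) = f(\mutilde) - \sum_{j=1}^{\Dim} \frac{a_j\theta_j}{\alpha_j},
\end{equation*}
using the identity $\tfrac{1}{1-\alpha_j} - \tfrac{1}{\alpha_j(1-\alpha_j)} = -\tfrac{1}{\alpha_j}$. This is precisely the quantity appearing inside the absolute value of the corollary, so the target becomes the bound on $\bigl|\sum_{j=0}^{\Dim}(\betahat_j - \beta_j)\bigr|$.

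To conclude, I would control this scalar deviation by the Euclidean norm of the full coefficient vector: by Cauchy--Schwarz,
\begin{equation*}
\biggl|\sum_{j=0}^{\Dim}(\betahat_j - \beta_j)\biggr| \leq \sqrt{\Dim+1}\,\norm{\betahat-\beta},
\end{equation*}
and Theorem~\ref{th:main-result-light} then yields the advertised deviation bound with probability at least $1-\eta$, the extra $\sqrt{\Dim+1}$ factor being absorbed into the constants that, as the corollary explicitly states, are allowed to depend on~$\Dim$ and on the $\alpha_j$s.

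There is no real obstacle here once Theorem~\ref{th:main-result-light} is granted; the only place where one has to be mildly careful is the bookkeeping of constants in the $\Dim$-dependence (both the $\sqrt{\Dim+1}$ from Cauchy--Schwarz and whatever dependence on $\Dim$ is already baked into the hidden constants of the theorem). Everything else is a one-line algebraic identity plus an appeal to the main result.
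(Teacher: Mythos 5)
Your proposal is correct and follows exactly the route the paper takes (sketched in the remark preceding the corollary): $\fhat(\xi)$ is the sum of the $\betahat_j$s because $\xi$ is encoded as the all-ones vector, the algebraic identity $\tfrac{1}{1-\alpha_j}-\tfrac{1}{\alpha_j(1-\alpha_j)}=-\tfrac{1}{\alpha_j}$ gives the centering term, and the deviation is controlled by $\norm{\betahat-\beta}$ via Theorem~\ref{th:main-result-light}, with the $\sqrt{\Dim+1}$ from Cauchy--Schwarz absorbed into the $\Dim$-dependent constants.
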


Obviously the goal of \tlime is not to produce a very accurate model, but to provide interpretability. 
The error of the local model can be seen as a hint about how reliable the interpretation might be. 

\begin{figure}[ht!]
	\vspace{-0.1in}
	\begin{center}
		\includegraphics[scale=0.25]{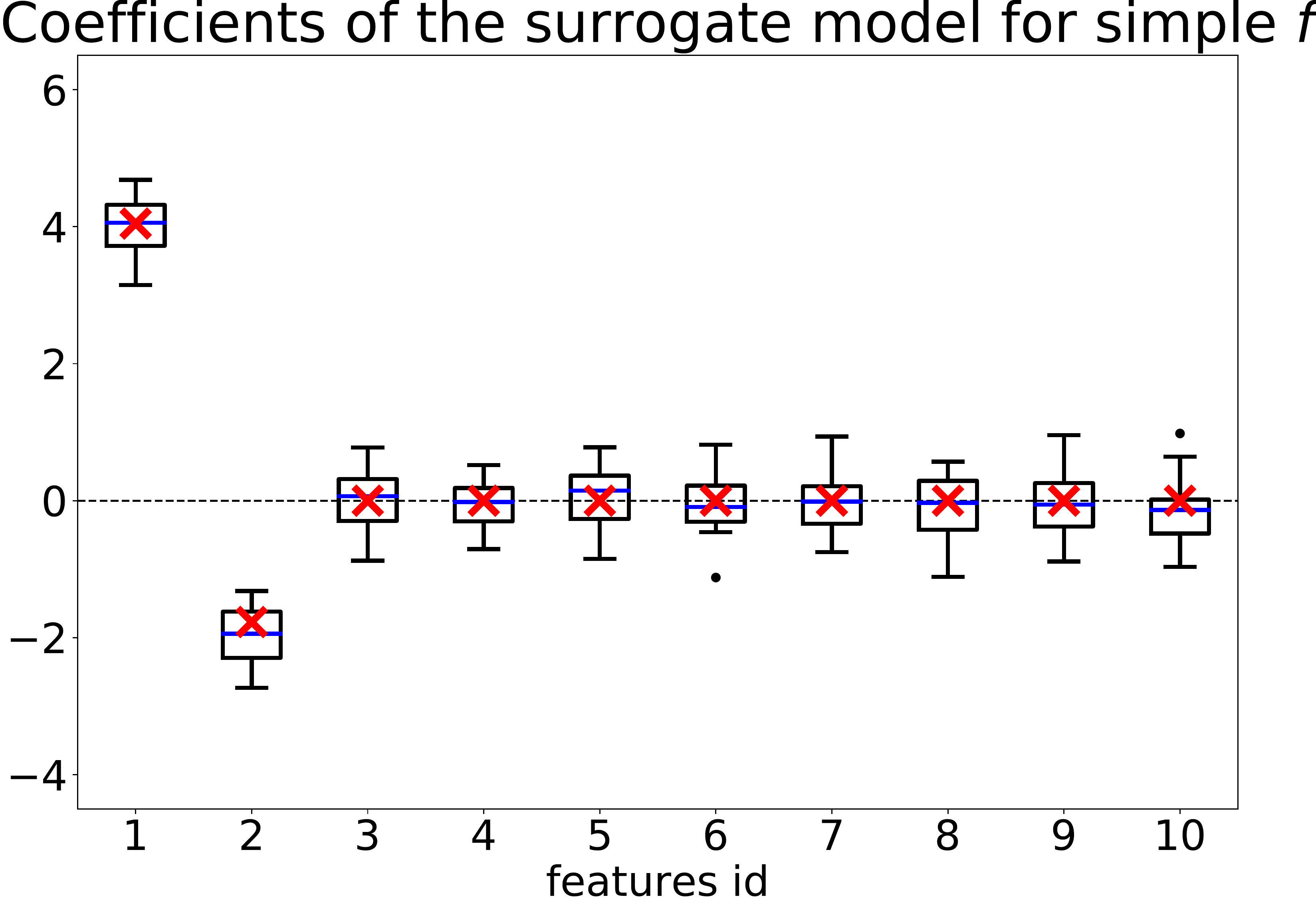}
	\end{center}
	\vspace{-0.2in}
	\caption{\label{fig:simple_f}
	Example where the true underlying black box model only depends on \emph{two} features: $f(x)=10x_1-10x_2$. For each of the $10$ features, we plot the values of the $\betahat_j$s obtained by \tlime. 
	The blue line shows the median over all experiments, the red cross the $\beta_j$ theoretical value according to our theorem. The boxplots contain values between first and third quartiles, the whiskers are $1.5$ times the interquartile ranges, and the black dots mark values outside this range. 
	To produce the figure, we made $20$ repetitions of the experiment, with $n=10^4$ examples and $\nu=1$.
	We see that \tlime finds nonzero coefficients exactly for the first two coordinates, up to noise coming from the sampling. 
	This is the result that one would hope to achieve, and also the result predicted by our theory. 
	}
\end{figure}

\section{Consequences of our main results}
\label{sec:discussion}

We now discuss the consequences of Theorem~\ref{th:main-result-light} and Corollary~\ref{cor:local-model-error-light}. 

\paragraph{Dependency in the partial derivatives. }
A first consequence of Theorem~\ref{th:main-result-light} is that the coefficients of the linear model given by \tlime are approximately \textbf{proportional to the partial derivatives of~$f$} at~$\xi$, with constant depending on our assumptions. 
An interesting follow-up is that, if~$f$ depends only on a few features, then the partial derivatives in the other coordinates are zero, and the coefficients given by \tlime for these coordinates will be~$0$ as well. 
For instance, if $f(x)=10x_1-10x_2$ as in Figure~\ref{fig:simple_f}, then $\beta_1\simeq 11.4$, $\beta_2\simeq -4.1$, and $\beta_j=0$ for all $j\geq 3$. 
In a simple setting, we thus showed that \tlime does not produce interpretations with additional erroneous feature dependencies. 
Indeed, when the number of samples is high, the coordinates which do not influence the prediction will have a coefficient close to the theoretical value~$0$ in the surrogate linear model. 
For a bandwidth not too large, this dependency in the partial derivatives seems to hold to some extent for more general functions. 
See for instance Figure~\ref{fig:second-order-perturbations}, where we demonstrate this phenomenon for a kernel regressor.

\begin{figure}[ht!]
	\vspace{-0.1in}
	\begin{center}
		\includegraphics[scale=0.25]{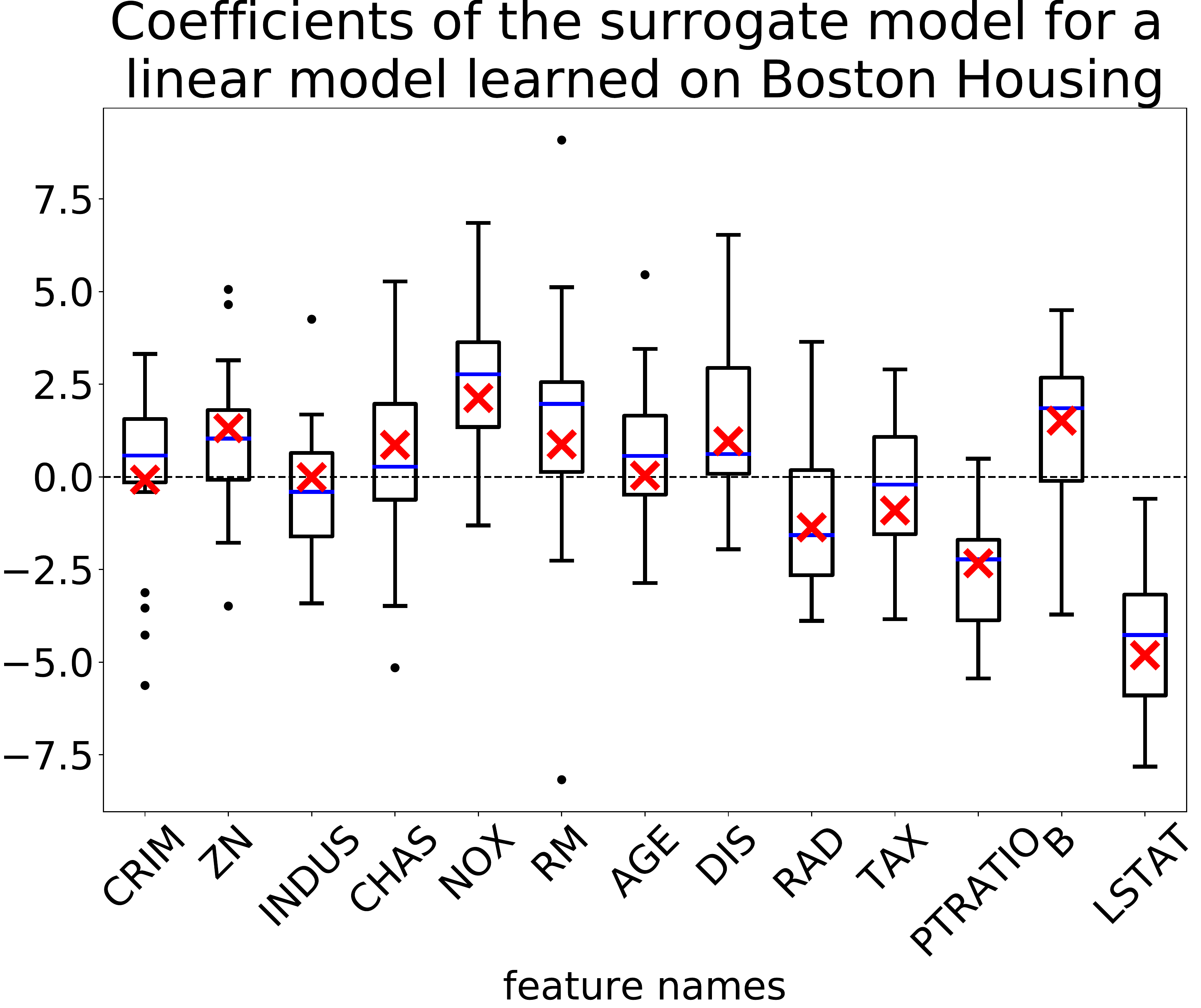}
	\end{center}
	\vspace{-0.2in}
	\caption{\label{fig:theoretical_explanation_vs_empirical}Values of the coefficients obtained by \tlime on each coordinate in dimension $\Dim=13$ for a linear model trained on the Boston housing dataset \citep{Har_Rub:1978}. 
	The $\beta_j$s are concentrated around the red crosses, which denote the $\beta_j$s, the theoretical values predicted by Theorem~\ref{th:main-result-light}. 
	To produce the figure, we ran $20$ experiments with $n=10^3$ new samples generated for each run and we set $\nu=1$.
	}
\end{figure}

\medskip

\textbf{Robustness of the explanations. } 
Theorem~\ref{th:main-result-light} means that, for large~$n$, \tlime outputs coefficients that are very close to~$\beta$ with high probability, where~$\beta$ is a vector that can be computed explicitly as per Eq.~\eqref{eq:beta}. 
Still without looking too closely at the values of~$\beta$, this is already interesting and hints that there is some robustness in the interpretations provided by \tlime: given enough samples, the explanation will not jump from one feature to the other. 
This is a desirable property for any interpretable method, since the user does not want explanations to change randomly with different runs of the algorithm. 
We illustrate this phenomenon in Figure~\ref{fig:theoretical_explanation_vs_empirical}.

\begin{figure}[ht]
	\begin{center}
		\includegraphics[scale=0.25]{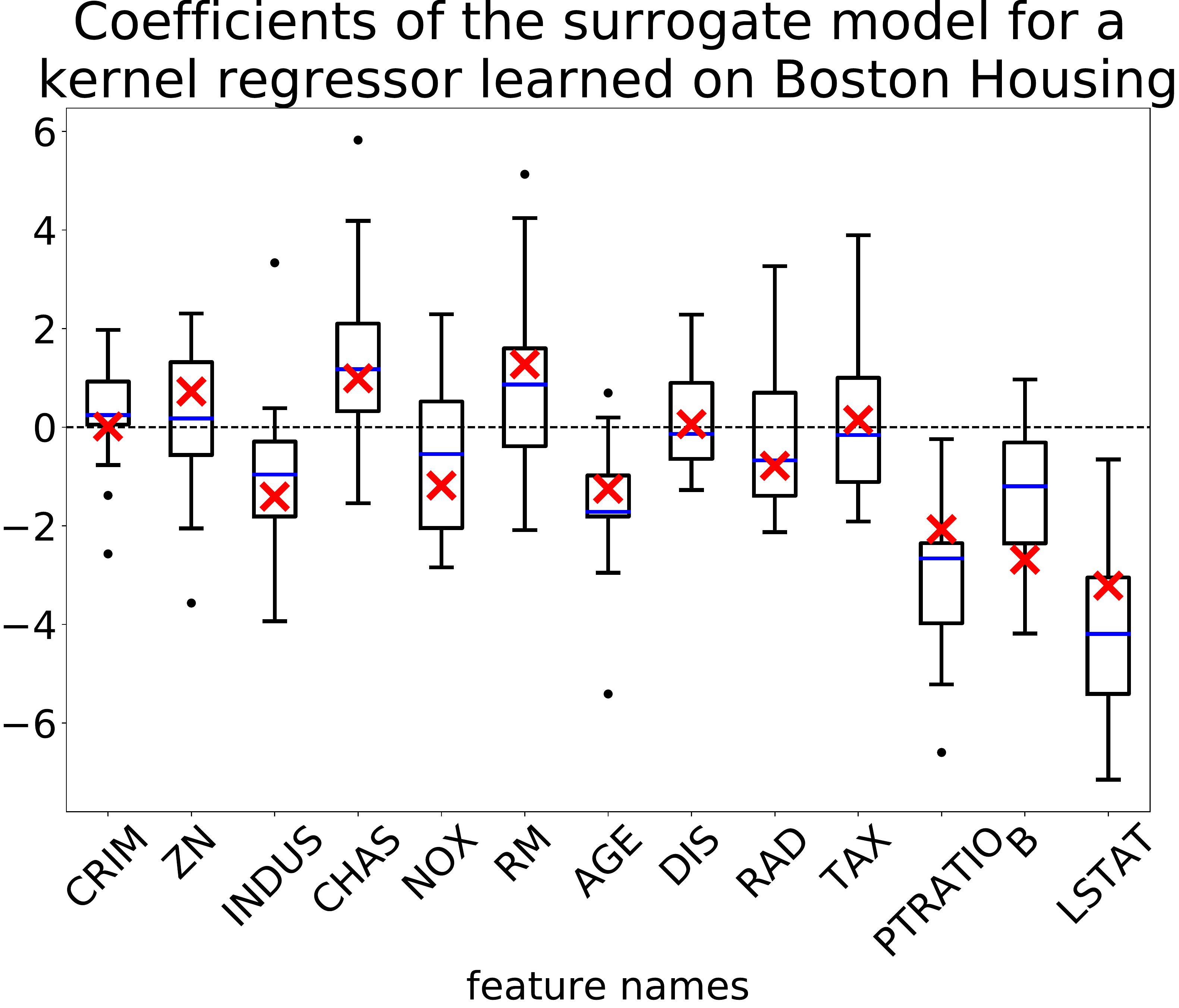}
	\end{center}
	\vspace{-0.2in}
	\caption{\label{fig:second-order-perturbations}Values of the coefficients obtained by \tlime on each coordinate. We used the same settings as in Figure~\ref{fig:theoretical_explanation_vs_empirical}, but this time we train a \emph{kernel ridge} regressor on the Boston Housing dataset---a nonlinear function. For the ridge regression, we used the Gaussian kernel with scale parameter set to~$5$ and default regularization constant ($\alpha=1$). 
	We then estimated the partial derivatives of~$f$ at~$\xi$ and reported the corresponding $\beta_j$s in red. 
	For the chosen bandwidth (we took $\nu=1$), the experiments seem to roughly agree with our theory.  
	}
\end{figure}

\paragraph{Influence of the bandwidth. }
Unfortunately, Theorem~\ref{th:main-result-light} does not provide directly a founded way to pick~$\nu$, which would for instance minimize the variance for a given level of noise. 
The quest for a founded heuristic is still open. 
However, we gain some interesting insights on the role of~$\nu$. 
Namely, for fixed $\xi$, $\mu$, and $\sigma$, the multiplicative constants $\theta_j/(\alpha_j(1-\alpha_j))$ appearing in Eq.~\eqref{eq:beta} depend essentially on~$\nu$. 

Without looking too much into these constants, one can already see that they regulate the magnitude of the coefficients of the surrogate model in a non-trivial way. 
For instance, in the experiment depicted in Figure~\ref{fig:simple_f}, the partial derivative of~$f$ along the two first coordinate has the same magnitude, whereas the interpretable coefficient is much larger for the first coordinate than the second. 
Thus we believe that the value of the coefficients in the obtained linear model should not be taken too much into account. 

More disturbing, it is possible to artificially (or by accident) put $\theta_j$ to zero, therefore \textbf{forgetting} about feature~$j$ in the explanation, whereas it could play an important role in the prediction. 
To see why, we have to return to the definition of the $\theta_j$s: 
since $q_{j-}<q_{j+}$ by construction, to have $\theta_j=0$ is possible only if 
\begin{equation}
\label{eq:special-nu}
\nucrit\defeq \sigma^2 \frac{2\xi_j - q_{j-}-q_{j+}}{-2\mu_j+q_{j-}+q_{j+}} > 0
\, ,
\end{equation}
and $\nu^2$ is set to~$\nucrit$. 
We demonstrate this switching-off phenomenon in Figure~\ref{fig:switch-off}. 
An interesting take is that~$\nu$ not only decides at which scale the explanation is made, but also the magnitude of the coefficients in the interpretable model, even for small changes of~$\nu$. 

\begin{figure}[ht!]
\begin{center}
\includegraphics[scale=0.25]{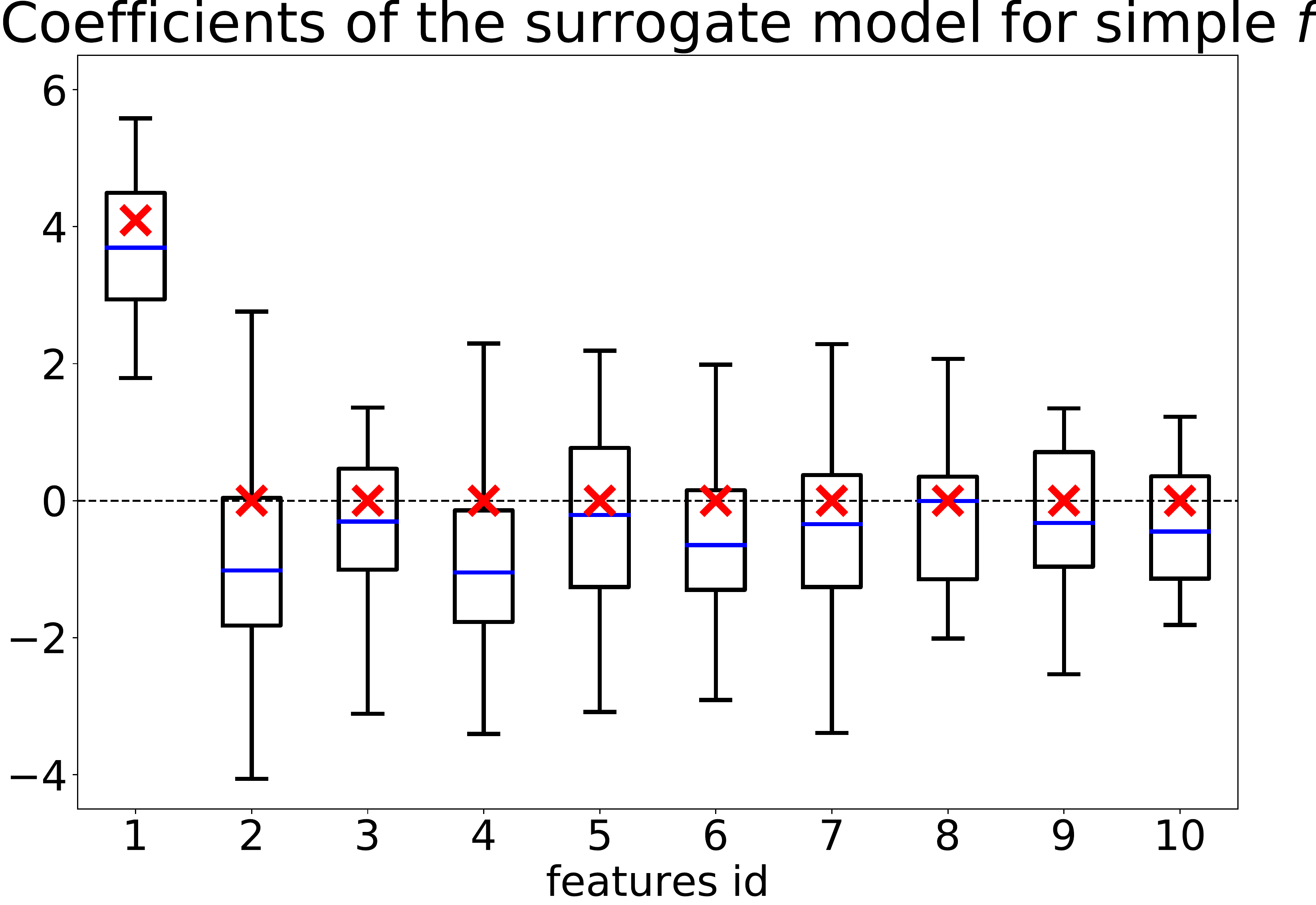}
\end{center}
\vspace{-0.2in}
\caption{\label{fig:switch-off}Values of the coefficients given by LIME. In this experiment, we took exactly the same setting as in Figure~\ref{fig:simple_f}, but this time set the bandwidth to $\nu=0.53$ instead of $1$. 
In that case, the second feature is switched-off by \tlime. 
Note that it is not the case that~$\nu$ is too small and that we are in a degenerated case: \tlime still puts a nonzero coefficient on the first coordinate. 
}
\end{figure}

\paragraph{Error of the surrogate model. }
A simple consequence of Corollary~\ref{cor:local-model-error-light} is that, unless some  cancellation happens between in the term $f(\mutilde)-\sum_j \frac{a_j\theta_j}{\alpha_j}$, \textbf{the local error of the surrogate model is bounded away from zero}. 
For instance, as soon as $\mutilde\neq \mu$, it is the general situation. 
Therefore, the surrogate model produced by \tlime is not \emph{accurate} in general. 
We show some experimental results in Figure~\ref{fig:error_local_model}. 

Finally, we discuss briefly the limitations of Theorem~\ref{th:main-result-light}. 

\paragraph{Linearity of $f$. }
The linearity of~$f$ is a quite restrictive assumption, but we think that it is useful to consider for two reasons. 

First, the weighted nature of the procedure means that \tlime is not considering examples that are too far away from~$\xi$ with respect to the scaling parameter~$\nu$. 
Thus it is truly a \emph{local} assumption on~$f$, that could be replaced by a boundedness assumption on the Hessian of~$f$ in the neighborhood of~$\xi$, at the price of more technicalities and assuming that~$\nu$ is not too large. 
See, in particular, Lemma~\ref{lemma:gaussian-second-order} in the Appendix, after which we discuss an extension of the proof when~$f$ is linear with a second degree perturbative term.
We show in Figure~\ref{fig:second-order-perturbations} how our theoretical predictions behave for a non-linear function (a kernel ridge regressor).

Second, our main concern is to know whether \tlime operates correctly in a simple setting, and not to provide bounds for the most general~$f$ possible. 
Indeed, if we can already show imperfect behavior for \tlime when~$f$ is linear as seen earlier, our guess is that such behavior will only worsen for more complicated~$f$. 

\begin{figure}[ht!]
	\vspace{-0.1in}
	\begin{center}
	\includegraphics[scale=0.25]{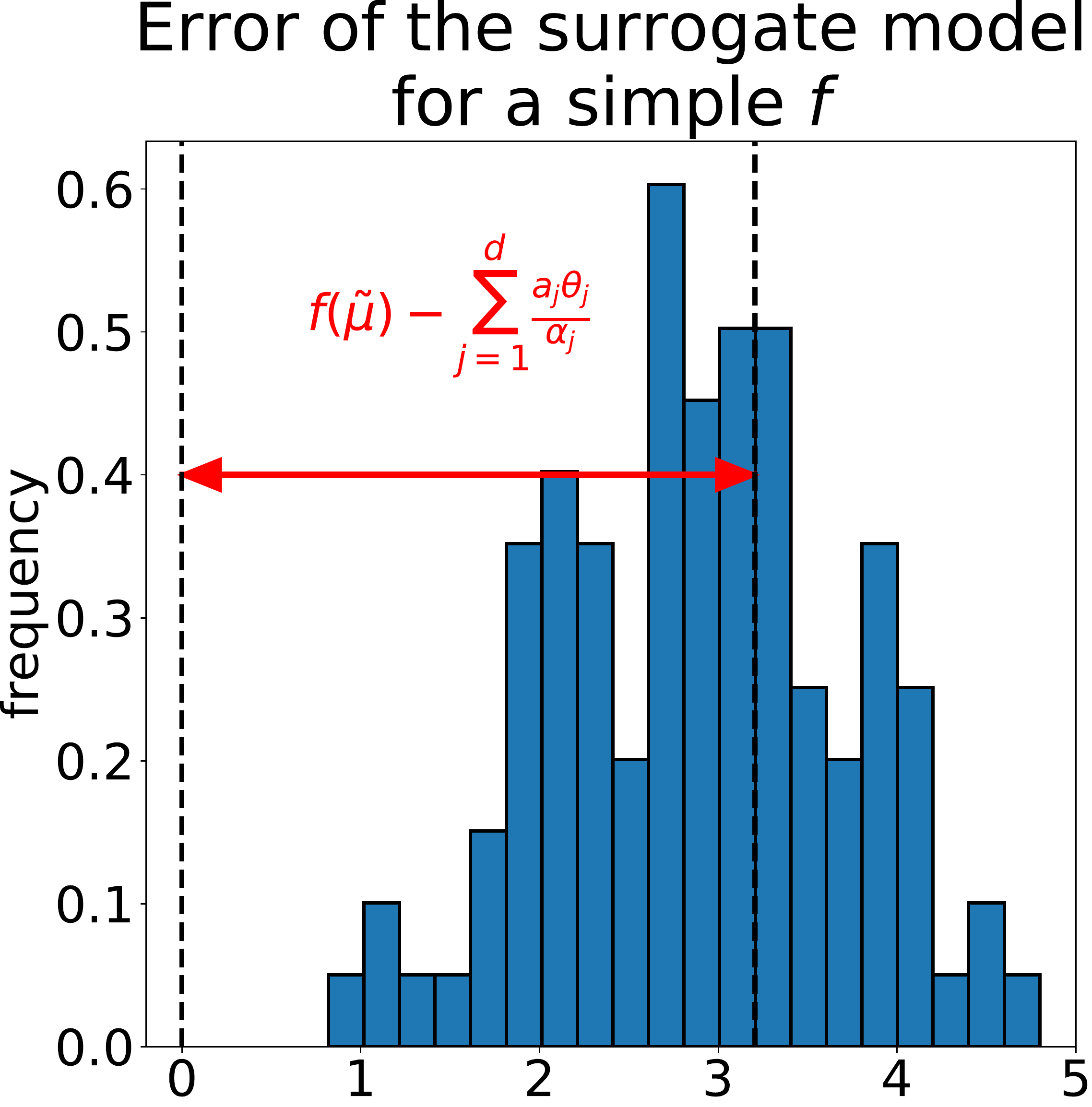}
	\end{center}
	\vspace{-0.2in}
	\caption{\label{fig:error_local_model}Histogram of the errors $\fhat(\xi)-f(\xi)$. 
	The setting is the same as in Figure~\ref{fig:simple_f}, but we repeated the experiment $100$ times. 
	The red double arrow marks the value given by Corollary~\ref{cor:local-model-error-light} around which the local error concentrate. 
	With high probability, the error of the surrogate model is bounded away from~$0$. 
	}
\end{figure}

\paragraph{Sampling strategy. }

In our derivation, we use the theoretical quantiles of the Gaussian distribution along each axis, and not prescribed quantiles. 
We believe that the proof could eventually be adapted, but that the result would loose in clarity. 
Indeed, the computations for a truncated Gaussian distribution are far more convoluted than for a Gaussian distribution. 
For instance, in the proof of Lemma~\ref{lemma:expected-covariance-matrix} in the Appendix, some complicated quantities depending on the prescribed quantiles would appear when computing $\expec{\pi_i z_{ik}}$. 


\section{Proof of Theorem~\ref{th:main-result-light}}
\label{sec:proofs}

In this section, we explain how Theorem~\ref{th:main-result-light} is obtained. 
All formal statements and proofs are in the Appendix. 

\paragraph{Outline. }
%
The main idea underlying the proof is to realize that $\betahat$ is the solution of a weighted least squares problem. 
Denote by $\Pi\in\Reals^{n\times n}$ the diagonal matrix such that $\Pi_{ii}=\pi_i$ (the \emph{weight matrix}), and set $f(x)\in\Reals^{\Dim+1}$ the response vector. 
Then, taking the gradient of Eq.~\eqref{eq:weighted-least-squares-main}, one obtains the key equation 
\begin{equation}
\label{eq:weighted-least-squares-main}
(Z^\top \Pi Z)\betahat = Z^\top \Pi f(x)
\, .
\end{equation}
Let us define $\Sigmahat \defeq \frac{1}{n}Z^\top \Pi Z$ and $\Gammahat \defeq \frac{1}{n}Z^\top \Pi f(x)$, as well as their population counterparts $\Sigma\defeq\Expec[\Sigmahat]$ and $\Gamma\defeq\Expec[\Gammahat]$. 
Intuitively, if we can show that $\Sigmahat$ and $\Gammahat$ are close to $\Sigma$ and $\Gamma$, assuming that $\Sigma$ is invertible, then we can show that $\betahat$ is close to $\beta \defeq \Sigmainv\Gamma$. 

The main difficulties in the proof come from the \textbf{non-linear} nature of the new features~$z_i$, introducing tractable but challenging integrals. 
Fortunately, the Gaussian sampling of LIME allows us to overcome these challenges (at the price of heavy computations). 

\paragraph{Covariance matrix. }
The first part of our analysis is thus concerned with the study of the empirical covariance matrix $\Sigmahat$. 
Perhaps surprisingly, it is possible to compute the population version of $\Sigmahat$:
\[
\Sigma = \cst\begin{pmatrix}
1 & \alpha_1 & \cdots & \alpha_\Dim \\
\alpha_1 & \alpha_1 & & \alpha_i\alpha_j \\
\vdots & & \ddots & \\
\alpha_\Dim & \alpha_i\alpha_j & & \alpha_\Dim 
\end{pmatrix}
\, ,
\]
where the $\alpha_j$s were defined in Section~\ref{sec:main-results}, and~$\cst$ is a scaling constant that does not appear in the final result (see Lemma~\ref{lemma:expected-covariance-matrix}). 

Since the $\alpha_j$s are always distinct from $0$ and $1$, the special structure of~$\Sigma$ makes it possible to invert it in closed-form.  
We show in Lemma~\ref{lemma:inverse-covariance-matrix} that
\[
\cst^{-1}\!\begin{pmatrix}
1+\sum_{j=1}^\Dim \frac{\alpha_j}{1-\alpha_j} & \frac{-1}{1-\alpha_1} & \cdots & \frac{-1}{1-\alpha_\Dim} \\
\frac{-1}{1-\alpha_1} & \frac{1}{\alpha_1(1-\alpha_1)} & & 0 \\
\vdots & & \ddots & \\
\frac{-1}{1-\alpha_\Dim} & 0 & & \frac{1}{\alpha_\Dim(1-\alpha_\Dim)}
\end{pmatrix}
\, .
\]
We then achieve control of $\opnorm{\Sigmahatinv - \Sigmainv}$ \emph{via} standard concentration inequalities, since the new samples are Gaussian and the binary features are \emph{bounded} (see Proposition~\ref{prop:control-operator-norm-inverse-covariance}). 

\paragraph{Right-hand side of Eq.~\eqref{eq:weighted-least-squares-main}. }
Again, despite the non-linear nature of the new features, it is possible to compute the expected version of $\Gammahat$ in our setting. 
%
In this case, we show in Lemma~\ref{lemma:computation-expected-response-vector} that 
\[
\Gamma = \cst \begin{pmatrix}
f(\mutilde) \\
\alpha_1 f(\mutilde) - a_1 \theta_1 \\
\vdots \\
\alpha_\Dim f(\mutilde) - a_\Dim \theta_\Dim 
\end{pmatrix}
\, ,
\]
where the $\theta_j$s were defined in Section~\ref{sec:main-results}. 
They play an analogous role to the $\alpha_j$s but, as noted before, they are signed quantities. 
As with the analysis of the covariance matrix, since the weights 
and the new features are 
bounded, it is possible to show a concentration result for~$\Gammahat$ (see Lemma~\ref{lemma:concentration-response}). 

\paragraph{Concluding the proof. }
We can now conclude, first upper bounding $\norm{\betahat - \Sigmainv \Gamma}$ by
\[
\opnorm{\Sigmahatinv}\norm{\Gammahat - \Gamma} + \opnorm{\Sigmahatinv-\Sigmainv}\norm{\Gamma}
\, ,
\]
and then controlling each of these terms using the previous concentration results. 
The expression of~$\beta$ is simply obtained by multiplying $\Sigmainv$ and $\Gamma$. 

\section{Conclusion and future directions}
\label{sec:conclusion}

In this paper we provide the first theoretical analysis of LIME, with some good news (LIME discovers interesting features) and bad news (LIME might forget some important features and the surrogate model is not faithful). 
All our theoretical results are verified by simulations. 

For future work, we would like to complement these results in various directions: 
Our main goal is to extend the current proof to any function by replacing~$f$ by its Taylor expansion at~$\xi$. 
On a more technical side, we would like to extend our proof to other distance functions (\emph{e.g.}, distances between the $z_i$s and $\xi$, which is the default setting of LIME), to non-isotropic sampling of the $x_i$s (that is, $\sigma$ not constant across the dimensions), and to ridge regression. 

\subsubsection*{Acknowledgements}

The authors would like to thank Christophe Biernacki for getting them interested in the topic, as well as Leena Chennuru Vankadara for her careful proofreading. 
This work has been supported by the German Research Foundation through the Institutional Strategy of the University of T\"ubingen (DFG, ZUK 63), the Cluster of Excellence ``Machine Learning---New Perspectives for Science'' (EXC 2064/1 number 390727645), and the BMBF Tuebingen AI Center (FKZ: 01IS18039A).

\bibliography{biblio}
\bibliographystyle{abbrvnat}

\onecolumn

\begin{center}
\mbox{}\\
{\Large Supplementary material for:}\\
\vspace{0.5cm}
{\Huge Explaining the Explainer: A First Theoretical Analysis of LIME}
\mbox{}\\\mbox{}
\end{center}

In this supplementary material, we provide the proof of Theorem~\ref{th:main-result-light} of the main paper. 
It is a simplified version of Theorem~\ref{th:main-result}. 
We first recall our setting in Section~\ref{sec:appendix:setting}. 
Then, following Section~\ref{sec:proofs} of the main paper, we study the covariance matrix in Section~\ref{sec:appendix:covariance}, and the right-hand side of the key equation~\eqref{eq:weighted-least-squares-main} in Section~\ref{sec:appendix:response-vector}. 
Finally, we state and prove Theorem~\ref{th:main-result} in Section~\ref{sec:appendix:conclusion}. 
Some technical results (mainly Gaussian integrals computation) and external concentration results are collected in Section~\ref{sec:appendix:technical}. 

\section{Setting}
\label{sec:appendix:setting}

Let us recall briefly the main assumptions under which we prove Theorem~\ref{th:main-result-light}. 
Recall that they are discussed in details in Section~\ref{sec:tabular-lime} of the main paper. 

\begin{hypo}[Linear $f$]
	\label{hyp:model}
	The black-box model can be written $a^\top x + b$, with $a\in\Reals^\Dim$ and $b\in\Reals$ fixed. 
\end{hypo}

\begin{hypo}[Gaussian sampling]
	\label{hyp:examples}
	The random variables $x_1,\ldots,x_n$ are i.i.d. $\gaussian{\mu}{\sigma^2\Id_\Dim}$. 
\end{hypo}

Also recall that, for any $1\leq i\leq n$, we set the weights to
\begin{equation}
\label{eq:def-weights-appendix}
\pi_i \defeq \exp\left(\frac{-\norm{x_i-\xi}^2}{2\nu^2}\right)
\, .
\end{equation}
We will need the following scaling constant:
\begin{equation}
\label{eq:def-cst}
\cst \defeq \left(\frac{\nu^2}{\nu^2+\sigma^2}\right)^{\Dim/2} \cdot \exp\left(\frac{-\norm{\xi-\mu}^2}{2(\nu^2+\sigma^2)}\right)
\, ,
\end{equation}
which does not play any role in the final result. 
One can check that $\cst \to 1$ when $\nu \gg \sigma$, regardless of the dimension. 

Finally, for any $1\leq j\leq \Dim$, recall that we defined
\begin{equation}
\label{eq:def-alpha}
\alpha_j \defeq \left[\frac{1}{2}\erfun{\frac{x-\mutilde_j}{\sigmatilde\sqrt{2}}}\right]_{q_{j-}}^{q_{j+}}
\, ,
\end{equation}
and
\begin{equation}
\theta_j \defeq \left[ \frac{\sigmatilde}{\sqrt{2\pi}}\exp\left(\frac{-(x-\mutilde_j)^2}{2\sigmatilde^2}\right)\right]_{q_{j-}}^{q_{j+}}
\, ,
\end{equation}
where $q_{j\pm}$ are the quantile boundaries of $\xi_j$. 
These coefficients are discussed in Section~\ref{sec:proofs} of the main paper. 
Note that all the expected values are taken with respect to the randomness on the $x_1,\ldots,x_n$. 

\section{Covariance matrix}
\label{sec:appendix:covariance}

In this section, we state and prove the intermediate results used to control the covariance matrix $\Sigmahat$. 
The goal of this section is to obtain the control of $\opnorm{\Sigmahatinv-\Sigmainv}$ in probability. 
Intuitively, if this quantity is small enough, then we can inverse Eq.~\eqref{eq:weighted-least-squares-main} and make very precise statements about~$\betahat$.

We first show that it is possible to compute the expected covariance matrix in closed form. 
Without this result, a concentration result would still hold, but it would be much harder to gain precise insights on the $\beta_j$s.

\begin{lemma}[Expected covariance matrix]
\label{lemma:expected-covariance-matrix}
Under Assumption~\ref{hyp:examples}, the expected value of $\Sigmahat$ is given by 
\[
\Sigma \defeq \cst\begin{pmatrix}
1 & \alpha_1 & \cdots & \alpha_\Dim \\
\alpha_1 & \alpha_1 & & \alpha_i\alpha_j \\
\vdots & & \ddots & \\
\alpha_\Dim & \alpha_i\alpha_j & & \alpha_\Dim 
\end{pmatrix}
\, .
\]
\end{lemma}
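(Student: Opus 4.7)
The plan is to compute $\mathbb{E}[\hat\Sigma]$ entrywise by exploiting the fact that, under Assumption~\ref{hyp:examples}, both the weight $\pi_i$ and the discretization $z_i$ factorize across coordinates. Since $\hat\Sigma = \frac{1}{n}Z^\top \Pi Z$ and the $x_i$ are i.i.d., it suffices to analyze a single sample and consider four types of entries: $\mathbb{E}[\pi_1]$ (the $(0,0)$ entry, using $z_{10}=1$), $\mathbb{E}[\pi_1 z_{1k}]$ (the $(0,k)$ and, by $z_{1k}^2=z_{1k}$, the diagonal $(k,k)$ entries), and $\mathbb{E}[\pi_1 z_{1k} z_{1\ell}]$ for $k\neq \ell$ (the off-diagonal entries). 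Since $\pi_1 = \prod_{j=1}^\Dim \exp(-(x_{1j}-\xi_j)^2/(2\nu^2))$ and $z_{1j} = \indic{q_{j-}\leq x_{1j}<q_{j+}}$ involves only coordinate~$j$, each of these expectations factorizes into a product of one-dimensional Gaussian integrals.

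The key one-dimensional computation is of the two forms
\[
I_j \defeq \Expec\!\left[\exps{-(x_{1j}-\xi_j)^2/(2\nu^2)}\right], \qquad J_j \defeq \Expec\!\left[\exps{-(x_{1j}-\xi_j)^2/(2\nu^2)}\indic{q_{j-}\leq x_{1j}<q_{j+}}\right],
\]
with $x_{1j}\sim \gaussian{\mu_j}{\sigma^2}$. Both are handled by completing the square in the exponent: combining the exponents of the Gaussian density and of the weight yields a Gaussian in $x$ with mean $\mutilde_j = (\nu^2\mu_j + \sigma^2\xi_j)/(\nu^2+\sigma^2)$ and variance $\sigmatilde^2$, times a residual factor $\exps{-(\mu_j-\xi_j)^2/(2(\nu^2+\sigma^2))}$ that does not depend on $x$, and the usual Jacobian factor $\sqrt{\nu^2/(\nu^2+\sigma^2)}$ coming from the change of normalizing constants. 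This directly produces
\[
I_j = \sqrt{\tfrac{\nu^2}{\nu^2+\sigma^2}}\exps{-(\mu_j-\xi_j)^2/(2(\nu^2+\sigma^2))}, \qquad J_j = I_j\cdot \alpha_j,
\]
where $\alpha_j$ is exactly the integral of a $\gaussian{\mutilde_j}{\sigmatilde^2}$ density over $[q_{j-},q_{j+})$, written in closed form using the error function as in Eq.~\eqref{eq:def-alpha}.

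From here the lemma falls out almost mechanically. Taking the product over all $\Dim$ coordinates gives $\Expec[\pi_1] = \prod_j I_j = \cst$ by the definition~\eqref{eq:def-cst}. For the $(0,k)$ entries, the product has $I_j$ for every $j\neq k$ and $J_k$ for the distinguished coordinate, so $\Expec[\pi_1 z_{1k}] = \cst\,\alpha_k$. For the off-diagonal entries with $k\neq \ell$, two factors $I$ are replaced by $J$, giving $\Expec[\pi_1 z_{1k}z_{1\ell}] = \cst\,\alpha_k\alpha_\ell$. The diagonal entries for $k\geq 1$ coincide with $\Expec[\pi_1 z_{1k}] = \cst\,\alpha_k$ because $z_{1k}\in\{0,1\}$. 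Assembling these four cases in the order dictated by the phantom coordinate produces exactly the matrix $\Sigma$ displayed in the statement.

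The only step that requires any care is the completion of the square leading to $(\mutilde_j,\sigmatilde)$; everything else is bookkeeping. In particular, there is no probabilistic difficulty in this lemma: concentration of $\Sigmahat$ around $\Sigma$ is deferred to the subsequent concentration result (Proposition~\ref{prop:control-operator-norm-inverse-covariance}), and here we only need linearity of expectation, independence across coordinates, and the explicit Gaussian integrals above.
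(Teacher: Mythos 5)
Your proposal is correct and follows essentially the same route as the paper: reduce to the three expectations $\expec{\pi_i}$, $\expec{\pi_i z_{ik}}$, and $\expec{\pi_i z_{ik}z_{i\ell}}$, factorize each across coordinates by independence, and evaluate the resulting one-dimensional Gaussian integrals by completing the square (the paper delegates this last step to its Lemma~\ref{lemma:gaussian-integral-zero}, which is exactly your $(\mutilde_j,\sigmatilde)$ computation). The observation that $z_{1k}^2=z_{1k}$ handles the diagonal entries is implicit in the paper's displayed form of $\Sigmahat$ and is stated explicitly, and correctly, in your write-up.
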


\begin{proof}
Elementary computations yield
\begin{equation*}
\Sigmahat = \frac{1}{n}\begin{pmatrix}
\sum_{i=1}^n \pi_i & \sum_{i=1}^n \pi_i z_{i1} & \cdots & \sum_{i=1}^n \pi_i z_{i\Dim} \\
\sum_{i=1}^n \pi_i z_{i1} & \sum_{i=1}^n \pi_i z_{i1} & & \sum_{i=1}^n \pi_i z_{ik}z_{i\ell} \\
\vdots & & \ddots & \\
\sum_{i=1}^n \pi_i z_{i\Dim} & \sum_{i=1}^n \pi_i z_{ik}z_{i\ell} & & \sum_{i=1}^n \pi_i z_{i\Dim}
\end{pmatrix}
\, .
\end{equation*}

Reading the coefficients of this matrix, we have essentially three computations to complete: $\expec{\pi_i}$, $\expec{\pi_i z_{ik}}$, and $\expec{\pi_i z_{ik}z_{i\ell}}$. 

\paragraph{Computation of $\expec{\pi_i}$. }
Since the $x_i$s are Gaussian (Assumption~\ref{hyp:examples}) and using the definition of the weights (Eq.~\eqref{eq:def-weights-appendix}), we can write
\[
\expec{\pi_i} = \int_{\Reals^\Dim} \exp\left(\frac{-\norm{x_i-\xi}^2}{2\nu^2}\right)\exp\left(\frac{-\norm{x_i-\mu}^2}{2\sigma^2}\right) \frac{\diff x_{i1}\cdots x_{i\Dim}}{(2\pi\sigma^2)^{\Dim/2}}
\, .
\]
By independence across coordinates, the last display amounts to 
\[
\prod_{j=1}^\Dim \int_{-\infty}^{+\infty} \exp\left(\frac{-(x-\xi_j)^2}{2\nu^2}+ \frac{-(x-\mu_j)^2}{2\sigma^2}\right) \frac{\diff x}{\sigma\sqrt{2\pi}}
\, .
\]
We then apply Lemma~\ref{lemma:gaussian-integral-zero} to each of the integrals within the product to obtain
\[
\prod_{j=1}^\Dim \frac{\nu}{\sqrt{\nu^2+\sigma^2}} \cdot  \exp\left(\frac{-(\xi_j-\mu_j)^2}{2(\nu^2+\sigma^2)}\right) = \frac{\nu^\Dim}{(\nu^2+\sigma^2)^{\Dim / 2}} \cdot \exp\left(\frac{-\norm{\xi-\mu}^2}{2(\nu^2+\sigma^2)}\right)
\, .
\]
We recognize the definition of the scaling constant (Eq.~\eqref{eq:def-cst}): we have proved that $\expec{\pi_i} = \cst$.

\paragraph{Computation of $\expec{\pi_i z_{ik}}$. }
Since the $x_i$s are Gaussian (Assumption~\ref{hyp:examples}) and using the definition of the weights (Eq.~\eqref{eq:def-weights-appendix}), 
\[
\expec{\pi_i} = \int_{\Reals^\Dim} \exp\left(\frac{-\norm{x_i-\xi}^2}{2\nu^2}\right)\exp\left(\frac{-\norm{x_i-\mu}^2}{2\sigma^2}\right)\indic{\phi(x_i)_k=\phi(\xi)_k} \frac{\diff x_{i1}  \cdots x_{i\Dim}}{(2\pi\sigma^2)^{\Dim/2}}
\, .
\]
By independence across coordinates, the last display amounts to 
\[
\int_{q_{k-}}^{q_{k+}} \exp\left(\frac{-(x-\xi_k)^2}{2\nu^2}+ \frac{-(x-\mu_k)^2}{2\sigma^2}\right) \frac{\diff x}{\sigma\sqrt{2\pi}} \cdot \prod_{\substack{j=1\\j\neq k}}^\Dim \int_{-\infty}^{+\infty} \exp\left(\frac{-(x-\xi_j)^2}{2\nu^2}+ \frac{-(x-\mu_j)^2}{2\sigma^2}\right) \frac{\diff x}{\sigma\sqrt{2\pi}}
\, .
\]
Using Lemma~\ref{lemma:gaussian-integral-zero}, we obtain
\[
\frac{\nu^\Dim}{(\nu^2+\sigma^2)^{\Dim / 2}} \cdot \exp\left(\frac{-\norm{\xi-\mu}^2}{2(\nu^2+\sigma^2)}\right) \cdot \left[\frac{1}{2}\erfun{\frac{\nu^2 (x-\mu_k) + \sigma^2 (x-\xi_k)}{\nu\sigma\sqrt{2(\nu^2+\sigma^2)}}}\right]_{q_{k-}}^{q_{k+}}
\, .
\]
We recognize the definition of the scaling constant (Eq.~\eqref{eq:def-cst}) and of the $\alpha_k$ coefficient (Eq.~\eqref{eq:def-alpha}): we have proved that $\expec{\pi_i z_{ik}} = \cst \alpha_k$. 

\paragraph{Computation of $\expec{\pi_i z_{ik}z_{i\ell}}$. }
Since the $x_i$s are Gaussian (Assumption~\ref{hyp:examples}) and using the definition of the weights (Eq.~\eqref{eq:def-weights-appendix}), 
\[
\expec{\pi_iz_{ik}z_{i\ell}} = \int_{\Reals^\Dim} \exp\left(\frac{-\norm{x_i-\xi}^2}{2\nu^2}\right)\exp\left(\frac{-\norm{x_i-\mu}^2}{2\sigma^2}\right)\indic{\phi(x_i)_k=\phi(\xi)_k}\indic{\phi(x_i)_\ell=\phi(\xi)_\ell} \frac{\diff x_{i1}  \cdots \diff x_{i\Dim}}{(2\pi\sigma^2)^{\Dim/2}}
\, .
\]
By independence across coordinates, the last display amounts to 
\begin{align*}
\prod_{\substack{j=1\\j\neq k,\ell}}^\Dim \int_{-\infty}^{+\infty} \exp\left(\frac{-(x-\xi_j)^2}{2\nu^2}+ \frac{-(x-\mu_j)^2}{2\sigma^2}\right) \frac{\diff x}{\sigma\sqrt{2\pi}} & \cdot \int_{q_{k-}}^{q_{k+}} \exp\left(\frac{-(x-\xi_k)^2}{2\nu^2}+ \frac{-(x-\mu_k)^2}{2\sigma^2}\right) \frac{\diff x}{\sigma\sqrt{2\pi}} \\ &\cdot \int_{q_{\ell-}}^{q_{\ell+}} \exp\left(\frac{-(x-\xi_\ell)^2}{2\nu^2}+ \frac{-(x-\mu_\ell)^2}{2\sigma^2}\right) \frac{\diff x}{\sigma\sqrt{2\pi}}
\, .
\end{align*}
Using Lemma~\ref{lemma:gaussian-integral-zero}, we obtain
\begin{align*}
\frac{\nu^\Dim}{(\nu^2+\sigma^2)^{\Dim / 2}} \cdot \exp\left(\frac{-\norm{\xi-\mu}^2}{2(\nu^2+\sigma^2)}\right) & \cdot \left[\frac{1}{2}\erfun{\frac{\nu^2 (x-\mu_k) + \sigma^2 (x-\xi_k)}{\nu\sigma\sqrt{2(\nu^2+\sigma^2)}}}\right]_{q_{k-}}^{q_{k+}} \\
&\cdot \left[\frac{1}{2}\erfun{\frac{\nu^2 (x-\mu_\ell) + \sigma^2 (x-\xi_\ell)}{\nu\sigma\sqrt{2(\nu^2+\sigma^2)}}}\right]_{q_{\ell-}}^{q_{\ell+}}
\, .
\end{align*}
We recognize the definition of the scaling constant (Eq.~\eqref{eq:def-cst}) and of the alphas (Eq.~\eqref{eq:def-alpha}): we have proved that $\expec{\pi_i z_{ik}z_{i\ell}} = \cst \alpha_k\alpha_{\ell}$. 
\end{proof}

As it turns out, we show that it is possible to invert $\Sigma$ in closed-form, therefore simplifying tremendously our quest for control of $\opnorm{\Sigmahatinv-\Sigmainv}$. 
Indeed, in most cases, even if concentration could be shown, one would not have a precise idea of the coefficients of $\Sigmainv$. 

\begin{lemma}[Inverse of the covariance matrix]
\label{lemma:inverse-covariance-matrix}
If $\alpha_j\neq 0,1$ for any $j\in\{1,\ldots,\Dim\}$, then $\Sigma$ is invertible, and
\[
\Sigmainv = \cst^{-1}\begin{pmatrix}
1+\sum_{j=1}^\Dim \frac{\alpha_j}{1-\alpha_j} & \frac{-1}{1-\alpha_1} & \cdots & \frac{-1}{1-\alpha_\Dim} \\
\frac{-1}{1-\alpha_1} & \frac{1}{\alpha_1(1-\alpha_1)} & & 0 \\
\vdots & & \ddots & \\
\frac{-1}{1-\alpha_\Dim} & 0 & & \frac{1}{\alpha_\Dim(1-\alpha_\Dim)}
\end{pmatrix}
\, .
\]
\end{lemma}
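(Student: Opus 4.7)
The plan is to exploit the block structure of $\Sigma$. Writing $\Sigma = \cst \cdot M$, the key observation is that
\[
M = \begin{pmatrix} 1 & \alpha^\top \\ \alpha & \alpha\alpha^\top + D \end{pmatrix},
\]
where $\alpha = (\alpha_1,\ldots,\alpha_\Dim)^\top$ and $D = \diag(\alpha_1(1-\alpha_1),\ldots,\alpha_\Dim(1-\alpha_\Dim))$. Indeed, the $(j,j)$ entry of the bottom-right $\Dim\times\Dim$ block is $\alpha_j = \alpha_j^2 + \alpha_j(1-\alpha_j)$, while the $(j,k)$ entry for $j\neq k$ is $\alpha_j\alpha_k$, matching $\alpha\alpha^\top + D$. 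Under the hypothesis that $\alpha_j \notin \{0,1\}$ for every $j$, the diagonal matrix $D$ has nonzero diagonal entries and is therefore invertible.

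First, I would invert $M$ using the Schur complement with respect to the scalar top-left block $A = 1$. The Schur complement is
\[
S = (\alpha\alpha^\top + D) - \alpha \cdot 1^{-1} \cdot \alpha^\top = D,
\]
which is diagonal with inverse $D^{-1} = \diag(1/(\alpha_j(1-\alpha_j)))$. Since both $A$ and $S$ are invertible, $M$ is invertible, and hence so is $\Sigma$. The standard block inversion formula then gives
\[
M^{-1} = \begin{pmatrix} 1 + \alpha^\top D^{-1} \alpha & -\alpha^\top D^{-1} \\ -D^{-1}\alpha & D^{-1} \end{pmatrix}.
\]

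Next, I would read off the entries. The top-left scalar is
\[
1 + \sum_{j=1}^\Dim \frac{\alpha_j^2}{\alpha_j(1-\alpha_j)} = 1 + \sum_{j=1}^\Dim \frac{\alpha_j}{1-\alpha_j},
\]
while the $j$th coordinate of $-D^{-1}\alpha$ is $-\alpha_j/(\alpha_j(1-\alpha_j)) = -1/(1-\alpha_j)$, and the bottom-right block is the diagonal $D^{-1}$ with entries $1/(\alpha_j(1-\alpha_j))$. These are exactly the entries announced in the statement. Finally, since $\Sigma = \cst M$, we get $\Sigma^{-1} = \cst^{-1} M^{-1}$, which concludes the proof.

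This argument is almost entirely structural, so there is no real obstacle; the only thing one needs is the correct identification of $D$, and the miraculous simplification $S = D$ that comes from subtracting $\alpha\alpha^\top$. If one prefers to avoid invoking a block-inversion formula, an alternative is to simply verify $\Sigma \cdot \Sigma^{-1} = \Id_{\Dim+1}$ directly by multiplying the two matrices, using only the identity $\sum_j \alpha_j/(1-\alpha_j) - \sum_{j\neq k}\alpha_j/(1-\alpha_j) = \alpha_k/(1-\alpha_k)$ to collapse the cross terms; this is routine but less illuminating.
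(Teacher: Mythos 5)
Your proof is correct and follows essentially the same route as the paper: both decompose $\Sigma$ into blocks with scalar top-left entry $A=1$, observe that the Schur complement $D - CA^{-1}B$ reduces to the diagonal matrix $\mathrm{Diag}\left(\alpha_1(1-\alpha_1),\ldots,\alpha_\Dim(1-\alpha_\Dim)\right)$, and apply the block inversion formula. Your version is slightly more explicit in reading off the resulting entries, but the argument is the same.
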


\begin{proof}
Define $\alpha\in\Reals^{\Dim}$ the vector of the $\alpha_j$s. 
Set $A\defeq 1$, $B\defeq \alpha^\top$, $C\defeq\alpha$, and
\[
D \defeq \begin{pmatrix}
\alpha_1 & & \alpha_j\alpha_k \\
 & \ddots & \\
\alpha_j\alpha_k & & \alpha_\Dim 
\end{pmatrix}
\, .
\]
Then $\Sigma$ is a block matrix that can be written $\Sigma = \cst \begin{bmatrix} A & B \\ C & D\end{bmatrix}$. 
We notice that 
\[
D-CA^{-1}B = \Diag\left(\alpha_1(1-\alpha_1),\ldots,\alpha_\Dim(1-\alpha_\Dim)\right)
\, .
\]
%
Note that, since~$\Erfun$ is an increasing function, the $\alpha_j$s are always distinct from~$0$ and~$1$. 
Thus $D-CA^{-1}B$ is an invertible matrix, and we can use the block matrix inversion formula to obtain the claimed result. 
\end{proof}

As a direct consequence of the computation of $\Sigmainv$, we can control its largest eigenvalue.

\begin{lemma}[Control of $\opnorm{\Sigmainv}$]
\label{lemma:control-largest-eigenvalue}
We have the following bound on the operator norm of the inverse covariance matrix: 
\[
\opnorm{\Sigmainv} \leq \frac{3\Dim \alphacst}{\cst}
\, ,
\]
where $\alphacst \defeq \max_{1\leq j\leq \Dim} \frac{1}{\alpha_j(1-\alpha_j)}$.
\end{lemma}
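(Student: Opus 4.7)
My plan is to read off the closed-form expression for $\Sigmainv$ provided by Lemma~\ref{lemma:inverse-covariance-matrix} and bound its operator norm via a row-sum estimate. Since $\Sigmainv$ is symmetric, we have $\opnorm{\Sigmainv} \leq \max_{i} \sum_{j} \abs{(\Sigmainv)_{ij}}$ (the induced $\ell^\infty$-to-$\ell^\infty$ norm dominates the spectral norm for symmetric matrices). Pulling out the factor $\cst^{-1}$, it remains to bound the maximum absolute row sum of the integer-indexed matrix displayed in Lemma~\ref{lemma:inverse-covariance-matrix}, and multiply back by $\cst^{-1}$ at the end.

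The work then splits into two cases. For any row $k \geq 1$, the only nonzero entries are $-1/(1-\alpha_k)$ in column $0$ and $1/(\alpha_k(1-\alpha_k))$ on the diagonal, so the row sum equals $(1+\alpha_k)/(\alpha_k(1-\alpha_k)) \leq 2/(\alpha_k(1-\alpha_k)) \leq 2\alphacst$. For row $0$, the entries are $1 + \sum_{j=1}^\Dim \alpha_j/(1-\alpha_j)$ on the diagonal and $-1/(1-\alpha_j)$ in column $j$, so the absolute row sum is bounded by
\[
1 + \sum_{j=1}^\Dim \frac{\alpha_j}{1-\alpha_j} + \sum_{j=1}^\Dim \frac{1}{1-\alpha_j}
\leq 1 + 2\Dim\, \alphacst,
\]
where I use the fact that $\alpha_j \in (0,1)$ implies $\alpha_j/(1-\alpha_j) \leq 1/(1-\alpha_j) \leq 1/(\alpha_j(1-\alpha_j)) \leq \alphacst$.

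To finish, I combine the two cases: the row-$0$ bound $1+2\Dim\alphacst$ dominates, and since $\alpha_j(1-\alpha_j) \leq 1/4$ forces $\alphacst \geq 4$, we have $\Dim \alphacst \geq 1$ and hence $1 + 2\Dim\alphacst \leq 3\Dim\alphacst$. Restoring the $\cst^{-1}$ prefactor yields $\opnorm{\Sigmainv} \leq 3\Dim \alphacst / \cst$, which is the claim. There is no real obstacle here since the closed-form expression in Lemma~\ref{lemma:inverse-covariance-matrix} does all the heavy lifting; the only minor point to be careful about is the monotonicity chain $\alpha_j/(1-\alpha_j) \leq 1/(1-\alpha_j) \leq 1/(\alpha_j(1-\alpha_j))$, which lets every entry be controlled by a single uniform constant $\alphacst$.
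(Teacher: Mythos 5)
Your proof is correct, and it reaches the stated bound by a slightly different route than the paper. The paper's proof also starts from the closed form of $\Sigmainv$ in Lemma~\ref{lemma:inverse-covariance-matrix}, but it bounds $\opnorm{\Sigmainv}$ by the Frobenius norm $\frobnorm{\Sigmainv}$ and then crudely dominates each entry of $\cst\,\Sigmainv$ by $\alphacst$ (using $\alpha_j\in(0,1)$), arriving at $\opnorm{\Sigmainv}^2\leq 6\cst^{-2}\Dim^2\alphacst^2$ and hence the claim. You instead use the maximum absolute row sum, which is a legitimate upper bound on the spectral norm here precisely because $\Sigmainv$ is symmetric (spectral norm equals spectral radius, which any induced norm dominates) --- that symmetry hypothesis is the one point worth stating explicitly, and you do. Both arguments hinge on the same two elementary observations: every entry of $\cst\,\Sigmainv$ is at most $\alphacst$ in absolute value via the chain $\alpha_j/(1-\alpha_j)\leq 1/(1-\alpha_j)\leq 1/(\alpha_j(1-\alpha_j))$, and $\alphacst\geq 4$ lets the additive $1$ be absorbed into $\Dim\alphacst$. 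Your version is arguably tidier: the row-sum decomposition exploits the arrowhead sparsity (rows $k\geq 1$ have only two nonzero entries and contribute just $2\alphacst$), making it transparent that row $0$ is the bottleneck, whereas the Frobenius bound discards that structure. Neither is sharp, and the paper's own remark about the secular equation of the arrowhead matrix points to the same slack you identify.
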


\begin{proof}
We control the operator norm of $\Sigmainv$ by its Frobenius norm:
Namely, 
\begin{align*}
\opnorm{\Sigmainv}^2 &\leq \frobnorm{\Sigmainv}^2 \\
&= \cst^{-2} \left[\left(1+\sum \frac{\alpha_j}{1-\alpha_j}\right)^2 + \sum \frac{1}{(1-\alpha_j)^2} + \sum \frac{1}{\alpha_j(1-\alpha_j)}\right] \\
\opnorm{\Sigmainv}^{2} &\leq 6 \cst^{-2} \Dim^{2} \left(\max \frac{1}{\alpha_j(1-\alpha_j)}\right)^{2}
\, ,
\end{align*}
where we used $\alpha_j\in (0,1)$ in the last step of the derivation. 
\end{proof}

\begin{remark}
Better bounds can without doubt be obtained. 
A step in this direction is to notice that $S\defeq \cst \Sigmainv$ is an arrowhead matrix \citep{OLe_Ste:1996}. 
Thus the eigenvalues of~$S$ are solutions of the secular equation 
\[
1+\sum_{j=1}^\Dim \frac{\alpha_j}{1-\alpha_j}-\lambda +\sum_{j=1}^\Dim \frac{\alpha_j}{(1-\alpha_j)(1-\lambda \alpha_j (1-\alpha_j))} = 0
\, .
\]
Further study of this equation could yield an improved statement for Lemma~\ref{lemma:control-largest-eigenvalue}. 
\end{remark}

We now show that the empirical covariance matrix concentrates around $\Sigma$. 
It is interesting to see that the non-linear nature of the new coordinates (the $z_{ij}$s) calls for complicated computations but allows us to use simple concentration tools since they are, in essence, Bernoulli random variables. 

\begin{lemma}[Concentration of the empirical covariance matrix]
\label{lemma:concentration-empirical-covariance-matrix}
Let $\Sigmahat$ and $\Sigma$ be defined as before. 
Then, for every $t>0$,
\[
\proba{\opnorm{\Sigmahat - \Sigma} \geq t} \leq 4\Dim^2 \exp (-2nt^2)
\, .
\]
\end{lemma}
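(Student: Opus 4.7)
The plan is to reduce the control of $\opnorm{\Sigmahat - \Sigma}$ to entrywise control, and then invoke a standard scalar concentration inequality.

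First, I would observe that every entry of $\Sigmahat$ is an empirical average of i.i.d.\ random variables bounded in $[0,1]$. Indeed, with the convention $z_{i0}=1$, one can write $(\Sigmahat)_{jk} = \frac{1}{n}\sum_{i=1}^n \pi_i z_{ij} z_{ik}$, and each summand lies in $[0,1]$ because $\pi_i \in (0,1]$ (the exponent in Eq.~\eqref{eq:def-weights-appendix} is non-positive) and $z_{ij}\in\{0,1\}$. By Assumption~\ref{hyp:examples}, the $x_i$'s are i.i.d., hence so are the summands for each fixed $(j,k)$, and Lemma~\ref{lemma:expected-covariance-matrix} identifies $\Sigma_{jk} = \Expec[\pi_i z_{ij} z_{ik}]$.

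Second, I would apply Hoeffding's inequality to each entry: for every $s > 0$ and every pair $(j,k)\in\{0,\ldots,\Dim\}^2$,
\[
\proba{\abs{(\Sigmahat - \Sigma)_{jk}} \geq s} \leq 2\exp(-2n s^2).
\]
This is the only probabilistic tool needed, and its applicability rests entirely on the $[0,1]$-boundedness established above.

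Third, I would pass from entrywise deviations to operator-norm deviations using the standard matrix-norm sandwich $\opnorm{A} \leq \frobnorm{A} \leq (\Dim+1)\max_{j,k}\abs{A_{jk}}$, valid for any $(\Dim+1)\times(\Dim+1)$ matrix. Consequently the event $\{\opnorm{\Sigmahat - \Sigma} \geq t\}$ is contained in the event $\{\max_{j,k}\abs{(\Sigmahat - \Sigma)_{jk}} \geq t/(\Dim+1)\}$, and a union bound over the $(\Dim+1)^2$ entries closes the argument.

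The route is clean because the $z_{ij}$'s are Bernoulli-like and the weights $\pi_i$ are automatically bounded by $1$, so Hoeffding applies without any further structural work; Lemma~\ref{lemma:expected-covariance-matrix} does all the heavy lifting by pinning down $\Sigma$ exactly. The main quantitative price one pays on this route is a factor $\Dim+1$ that arises in the operator-norm-to-entrywise passage, which degrades the exponent to $\exp(-2nt^2/(\Dim+1)^2)$; obtaining a purely dimension-free rate inside the exponential would require genuinely matrix-valued concentration tools such as matrix Hoeffding or matrix Bernstein, but the scalar route is the simplest path to the stated qualitative bound.
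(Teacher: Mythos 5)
Your proof is correct and follows essentially the same route as the paper's: entrywise Hoeffding bounds on the $[0,1]$-valued summands $\pi_i z_{ij}z_{ik}$, a union bound over the $(\Dim+1)^2\leq 2\Dim^2$ entries, and the passage $\opnorm{\cdot}\leq\frobnorm{\cdot}$. Your closing remark is in fact more careful than the paper's own argument, which applies Hoeffding at level $t$ per entry and then asserts the operator-norm bound at level $t$ without accounting for the factor $(\Dim+1)$ incurred when converting entrywise control into Frobenius-norm control; since the paper explicitly treats $\Dim$ as a fixed constant and does not optimize dimension dependence, this discrepancy is harmless, but you are right that the exponent should honestly read $-2nt^2/(\Dim+1)^2$ on this route.
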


\begin{proof}
Recall that $\opnorm{\cdot}\leq \frobnorm{\cdot}$: it suffices to show the result for the Frobenius norm. 
Next, we notice that the summands appearing in the entries of $\Sigmahat$, $X_i^{(1)}\defeq \pi_i$, $X_i^{(2,k)}\defeq \pi_i z_{ik}$, and $X_i^{(3,k,\ell)}\defeq \pi_i z_{ik}z_{i\ell}$, are all bounded. 
Indeed, by the definition of the weights and the definition of the new features, they all take values in $[0,1]$. 
Moreover, for given $k,\ell$, they are independent random variables. 
Thus we can apply Hoeffding's inequality (Theorem~\ref{th:hoeffding}) to $X_i^{(1)}$, $X_i^{(2,k)}$, and $X_i^{(3,k,\ell)}$. 
For any given $t >0$, we obtain
\[
\begin{cases}
\proba{\abs{\frac{1}{n}\sum_{i=1}^n (\pi_i-\expec{\pi_i})} \geq t} \leq 2\exp(-2nt^2) \\ 
\proba{\abs{\frac{1}{n}\sum_{i=1}^n (\pi_i z_{ik}-\expec{\pi_i})} \geq t} \leq 2\exp(-2nt^2) \\ 
\proba{\abs{\frac{1}{n}\sum_{i=1}^n (\pi_i z_{ik}z_{i\ell}-\expec{\pi_i})} \geq t} \leq 2\exp(-2nt^2)
\end{cases}
\]
We conclude by a union bound on the $(\Dim+1)^2\leq 2\Dim^2$ entries of the matrix. 
\end{proof}


As a consequence of the two preceding lemmas, we can control the largest eigenvalue of $\Sigmainv$. 

\begin{lemma}[Control of $\opnorm{\Sigmahatinv}$]
\label{lemma:control-largest-eigenvalue-empirical}
For every $t\in \left(0,\frac{\cst}{6\Dim \alphacst}\right]$, with probability greater than $1-4\Dim^2 \exp(-2nt^2)$, 
\[
\opnorm{\Sigmahatinv} \leq \frac{6\Dim \alphacst}{\cst} 
\, .
\]
\end{lemma}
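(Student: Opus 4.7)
The plan is to deduce this from the previous two lemmas: the deterministic bound $\opnorm{\Sigmainv}\leq 3\Dim\alphacst/\cst$ from Lemma~\ref{lemma:control-largest-eigenvalue}, and the concentration of $\Sigmahat$ around $\Sigma$ from Lemma~\ref{lemma:concentration-empirical-covariance-matrix}. Since both $\Sigma$ and $\Sigmahat$ are symmetric (the latter is $\frac{1}{n}Z^\top\Pi Z$ with $\Pi$ a diagonal matrix with nonnegative weights), the natural bridge between the two is Weyl's perturbation inequality, which gives $\abs{\lambda_{\min}(\Sigmahat)-\lambda_{\min}(\Sigma)}\leq \opnorm{\Sigmahat-\Sigma}$.

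First, I would rewrite Lemma~\ref{lemma:control-largest-eigenvalue} as a lower bound on the smallest eigenvalue: since $\Sigma$ is symmetric positive definite (as shown by the explicit inversion in Lemma~\ref{lemma:inverse-covariance-matrix}), $\opnorm{\Sigmainv}=1/\lambda_{\min}(\Sigma)$, so we immediately get $\lambda_{\min}(\Sigma)\geq \cst/(3\Dim\alphacst)$. Next, I would invoke Lemma~\ref{lemma:concentration-empirical-covariance-matrix} to assert that, for the prescribed $t$, the event $\mathcal{E}\defeq\{\opnorm{\Sigmahat-\Sigma}\leq t\}$ occurs with probability at least $1-4\Dim^2\exp(-2nt^2)$.

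On $\mathcal{E}$, Weyl's inequality yields
\[
\lambda_{\min}(\Sigmahat)\geq \lambda_{\min}(\Sigma)-t\geq \frac{\cst}{3\Dim\alphacst}-t.
\]
The restriction $t\leq \cst/(6\Dim\alphacst)$ is exactly what is needed to make the right-hand side at least $\cst/(6\Dim\alphacst)>0$. In particular, $\Sigmahat$ is invertible on $\mathcal{E}$, and since $\Sigmahat$ is symmetric positive semi-definite, $\opnorm{\Sigmahatinv}=1/\lambda_{\min}(\Sigmahat)\leq 6\Dim\alphacst/\cst$. This finishes the proof.

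I do not anticipate a real obstacle here; the only subtlety is making sure that Weyl's inequality applies (symmetry of both matrices) and that the cutoff $t\leq \cst/(6\Dim\alphacst)$ is tight enough to keep $\lambda_{\min}(\Sigmahat)$ bounded away from $0$ by the advertised constant. The factor of $2$ between the constants in Lemma~\ref{lemma:control-largest-eigenvalue} and the present statement is precisely the slack that absorbs the perturbation~$t$.
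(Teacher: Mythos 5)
Your proof is correct and follows essentially the same route as the paper's: lower-bound $\lambda_{\min}(\Sigma)$ via Lemma~\ref{lemma:control-largest-eigenvalue}, invoke the concentration bound of Lemma~\ref{lemma:concentration-empirical-covariance-matrix}, and transfer to $\lambda_{\min}(\Sigmahat)$ by Weyl's inequality, with the cutoff $t\leq \cst/(6\Dim\alphacst)$ absorbing the perturbation exactly as you describe.
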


\begin{proof}
Let $t \in (0,\cst/(6\Dim \alphacst)]$. 
According to Lemma~\ref{lemma:control-largest-eigenvalue}, $\lambda_{\max}(\Sigmainv) \leq 3\Dim \alphacst /\cst$. 
We deduce that 
\[
\lambda_{\min} (\Sigma) \geq \frac{\cst}{3\Dim \alphacst} 
\, .
\]
Now let us use Lemma~\ref{lemma:concentration-empirical-covariance-matrix} with this~$t$: 
there is an event~$\Omega$, which has probability greater than $1-4\Dim^2\exp(-2nt^2)$, such that $\opnorm{\Sigmahat-\Sigma} \leq t$. 
According to Weyl's inequality \citep{Wey:1912}, on this event, 
\[
\abs{\lambda_{\min}(\Sigmahat) - \lambda_{\min}(\Sigma)} \leq \opnorm{\Sigmahat-\Sigma} \leq t
\, .
\]
In particular, 
\[
\lambda_{\min}(\Sigmahat) \geq \lambda_{\min}(\Sigma) - t \geq \frac{\cst}{6\Dim \alphacst}
\, .
\]
Finally, we deduce that
\[
\opnorm{\Sigmahatinv} \leq \frac{6\Dim \alphacst}{\cst} 
\, .
\]
\end{proof}

We can now state and prove the main result of this section, controlling the operator norm of $\Sigmahat - \Sigma$ with high probability.

\begin{proposition}[Control of $\opnorm{\Sigmahatinv - \Sigmainv}$]
\label{prop:control-operator-norm-inverse-covariance}
For every $t\in\left(0,\frac{3\Dim \alphacst}{\cst}\right]$, we have
\[
\proba{\opnorm{\Sigmahatinv - \Sigmainv} \geq t} \leq 8\Dim^2 \exp\left(\frac{-\cst^4 nt^2}{162\Dim^4\alphacst^4}\right)
\, .
\]
\end{proposition}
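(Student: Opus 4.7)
The plan is to use the standard matrix identity
\[
\Sigmahatinv - \Sigmainv = \Sigmahatinv (\Sigma - \Sigmahat) \Sigmainv,
\]
which reduces the problem to controlling each of the three factors separately. Taking operator norms and applying submultiplicativity gives
\[
\opnorm{\Sigmahatinv - \Sigmainv} \leq \opnorm{\Sigmahatinv} \cdot \opnorm{\Sigmahat - \Sigma} \cdot \opnorm{\Sigmainv}.
\]
Two of the three factors are already handled by earlier results: Lemma~\ref{lemma:control-largest-eigenvalue} gives the deterministic bound $\opnorm{\Sigmainv} \leq 3\Dim \alphacst / \cst$, and Lemma~\ref{lemma:control-largest-eigenvalue-empirical} gives $\opnorm{\Sigmahatinv} \leq 6\Dim \alphacst / \cst$ on a high-probability event.

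The remaining step is to pick a deviation level $s$ for $\opnorm{\Sigmahat - \Sigma}$ so that the product of the three bounds is at most $t$. Plugging in the two operator-norm bounds, the requirement becomes
\[
\frac{18 \Dim^2 \alphacst^2}{\cst^2} \cdot s \leq t, \qquad \text{i.e.} \qquad s \leq \frac{\cst^2 t}{18 \Dim^2 \alphacst^2}.
\]
I would therefore set $s \defeq \cst^2 t / (18 \Dim^2 \alphacst^2)$ and invoke Lemma~\ref{lemma:concentration-empirical-covariance-matrix} at this level, combined with Lemma~\ref{lemma:control-largest-eigenvalue-empirical} at the same level. A union bound over these two events produces a failure probability of at most $8\Dim^2 \exp(-2 n s^2)$, and substituting the chosen $s$ gives the exponent $-\cst^4 n t^2 / (162 \Dim^4 \alphacst^4)$ claimed in the statement.

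The only thing to double check is the range condition. The application of Lemma~\ref{lemma:control-largest-eigenvalue-empirical} requires $s \leq \cst / (6 \Dim \alphacst)$, which, after substituting the chosen $s$, translates exactly into the hypothesis $t \leq 3\Dim \alphacst / \cst$ of the proposition, so everything fits together cleanly. There is no substantive obstacle here: the covariance lemma, its inverse, the eigenvalue control, and the concentration bound have all already done the real work. The main thing to be careful about is the bookkeeping of constants when propagating $s$ through the product bound and matching the resulting exponent with the $1/162$ factor in the statement, which is just a matter of tracking the square of the $18\Dim^2\alphacst^2/\cst^2$ factor.
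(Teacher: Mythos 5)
Your proposal is correct and follows essentially the same route as the paper's proof: the identity $\Sigmahatinv - \Sigmainv = \Sigmahatinv(\Sigma - \Sigmahat)\Sigmainv$, submultiplicativity, the deterministic bound on $\opnorm{\Sigmainv}$, the high-probability bound on $\opnorm{\Sigmahatinv}$, and the concentration of $\opnorm{\Sigmahat - \Sigma}$ at the level $s = \cst^2 t/(18\Dim^2\alphacst^2)$, with the same union bound and the same verification that the range condition on $t$ is exactly what makes Lemma~\ref{lemma:control-largest-eigenvalue-empirical} applicable. The constants and the resulting exponent match the paper's.
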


\begin{remark}
Proposition~\ref{prop:control-operator-norm-inverse-covariance} is the key tool to invert Eq.~\eqref{eq:weighted-least-squares-main} and gain precise control over $\betahat$. 
In the regime that we consider, the dimension $\Dim$ as well as the number of bins $p$ are \emph{fixed}, and $\Dim, \cst$, and $\alphacst$ are essentially numerical constants. 
We did not optimize these constant with respect to $\Dim$, since the main message is to consider the behavior for a large number of new examples ($n\to +\infty$).
\end{remark}

\begin{proof}
We notice that, assuming that $\Sigmahat$ is invertible, $\Sigmahatinv-\Sigmainv = \Sigmahatinv (\Sigma - \Sigmahat)\Sigmainv$. 
Since $\opnorm{\cdot}$ is sub-multiplicative, we just have to control each term individually. 
Lemma~\ref{lemma:control-largest-eigenvalue} gives us 
\[
\opnorm{\Sigmainv} \leq \frac{3\Dim \alphacst}{\cst} 
\, .
\]
Next, set $t_1\defeq \frac{\cst^2 t}{18\Dim^2 \alphacst^2}$. 
According to Lemma~\ref{lemma:concentration-empirical-covariance-matrix}, with probability greater than $1-4\Dim^2\exp(-2nt_1^2)$, 
\[
\opnorm{\Sigmahat - \Sigma} \leq t_1
\, .
\]
Finally, set $t_2 \defeq t_1$. 
It is easy to check that $t_2 \leq \cst/(6\Dim \alphacst)$. 
Thus we can use Lemma~\ref{lemma:control-largest-eigenvalue-empirical}: with probability greater than $1-4\Dim^2\exp(-2nt_1^2)$, 
\[
\opnorm{\Sigmahatinv} \leq \frac{6\Dim \alphacst}{\cst}
\, .
\]
By the union bound, with probability greater than $1-8\Dim^2 \exp\left(\frac{-\cst^4 nt^2}{162\Dim^4\alphacst^4}\right)$, 
\begin{align*}
\opnorm{\Sigmahatinv - \Sigmahat} &\leq \opnorm{\Sigmainv} \cdot \opnorm{\Sigmahat - \Sigma} \cdot \opnorm{\Sigmahatinv} \\
&\leq \frac{3\Dim \alphacst}{\cst} \cdot t_1 \cdot \frac{6\Dim\alphacst}{\cst} = t
\, .
\end{align*}
\end{proof}

\section{Right-hand side of Eq.~\eqref{eq:weighted-least-squares-main}}
\label{sec:appendix:response-vector}

In this section, we state and prove the results in relation to~$\Gammahat$. 
We begin with the computation of~$\Gamma$, the expected value of~$\Gammahat$. 

\begin{lemma}[Computation of $\Gamma$]
\label{lemma:computation-expected-response-vector}
Under Assumption~\ref{hyp:examples} and~\ref{hyp:model}, the expected value of $\Gammahat$ is given by
\[
\Gamma = \cst \begin{pmatrix}
f(\mutilde) \\
\alpha_1 f(\mutilde) - a_1 \theta_1 \\
\vdots \\
\alpha_\Dim f(\mutilde) - a_\Dim \theta_\Dim 
\end{pmatrix}
\, ,
\]
where the $\theta_j$s are defined by
\[
\theta_j \defeq \left[ \frac{\sigmatilde}{\sqrt{2\pi}}\exp\left(\frac{-(x-\mutilde_j)^2}{2\sigmatilde^2}\right)\right]_{q_{j-}}^{q_{j+}}
\, .
\]
\end{lemma}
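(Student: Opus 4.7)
The plan is to follow the same template as Lemma~\ref{lemma:expected-covariance-matrix}: expand each entry of $\Gammahat$, take expectations, and reduce everything to one-dimensional Gaussian integrals. The key algebraic observation, already implicit in the proof of Lemma~\ref{lemma:expected-covariance-matrix}, is that combining the weight $\pi_i$ with the sampling density produces (up to a scalar) a new product Gaussian: for any integrable $h:\Reals^\Dim\to\Reals$,
\[
\expec{\pi_i\, h(x_i)} = \cst \cdot \Expec_{\tilde X \sim \gaussian{\mutilde}{\sigmatilde^2 \Id_\Dim}}[h(\tilde X)],
\]
where the prefactor $\cst$ comes from completing the square in $\norm{x-\xi}^2/(2\nu^2) + \norm{x-\mu}^2/(2\sigma^2)$ via Lemma~\ref{lemma:gaussian-integral-zero}. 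I would state this reweighting identity once, at the start, and reuse it throughout.

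By Assumption~\ref{hyp:model}, $f$ is affine, so applying the identity with $h=f$ immediately yields $\expec{\pi_i f(x_i)} = \cst \cdot f(\mutilde)$, which is the first coordinate of $\Gamma$. For the remaining coordinates, I would decompose
\[
\expec{\pi_i f(x_i) z_{ij}} = \sum_k a_k \expec{\pi_i x_{ik} z_{ij}} + b \expec{\pi_i z_{ij}},
\]
plug in $\expec{\pi_i z_{ij}} = \cst \alpha_j$ from Lemma~\ref{lemma:expected-covariance-matrix}, and apply the key identity to $h(x) = x_k \indic{q_{j-}\leq x_j < q_{j+}}$. Independence of the coordinates of $\tilde X$ makes the cross terms $k\neq j$ factor as $\mutilde_k \alpha_j$, so everything reduces to a single one-dimensional computation.

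That computation is the diagonal truncated first moment
\[
\int_{q_{j-}}^{q_{j+}} x \, g(x)\,\diff x,
\]
where $g$ is the $1$D Gaussian density with mean $\mutilde_j$ and variance $\sigmatilde^2$. Splitting $x = \mutilde_j + (x-\mutilde_j)$ and using the elementary identity $(x-\mutilde_j)g(x) = -\sigmatilde^2 g'(x)$, the first piece integrates to $\mutilde_j \alpha_j$ and the second to $-\sigmatilde^2[g(x)]_{q_{j-}}^{q_{j+}} = -\theta_j$. Substituting back and collecting, the $k=j$ contribution brings in $-a_j\theta_j$ while all other terms recombine into $\alpha_j(\sum_k a_k \mutilde_k + b) = \alpha_j f(\mutilde)$, giving
\[
\expec{\pi_i f(x_i) z_{ij}} = \cst\bigl(\alpha_j f(\mutilde) - a_j \theta_j\bigr),
\]
which is exactly the $j$th entry of the claimed $\Gamma$. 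The main obstacle is merely bookkeeping: one must verify that the sign and normalization in the definition of $\theta_j$ match the primitive $-\sigmatilde^2 g(x)$, and that after the diagonal correction the off-diagonal cross terms recombine cleanly into $\alpha_j f(\mutilde)$ rather than into something more complicated.
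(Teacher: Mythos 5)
Your proof is correct, and the skeleton is the same as the paper's (linearity of $f$, then the case split $k=j$ versus $k\neq j$), but the way you organize the Gaussian computations is genuinely different and worth noting. The paper computes each expectation separately by invoking explicit antiderivative formulas for $\int \exps{-ax^2+bx+c}\diff x$ and $\int x\exps{-ax^2+bx+c}\diff x$ (Lemmas~\ref{lemma:gaussian-integral-zero} and~\ref{lemma:gaussian-integral-first-order}), repeating the completion of the square in every term. You instead isolate the change of measure once --- $\expec{\pi_i h(x_i)} = \cst\,\Expec_{\tilde X\sim\gaussian{\mutilde}{\sigmatilde^2\Id_\Dim}}[h(\tilde X)]$ --- and then everything reduces to moments of $\gaussian{\mutilde}{\sigmatilde^2}$: the untruncated terms are immediate, the cross terms factor by independence into $\mutilde_k\alpha_j$, and the only real computation is the truncated first moment, which you handle with the identity $(x-\mutilde_j)g(x)=-\sigmatilde^2 g'(x)$; the sign and normalization do match the paper's $\theta_j$, since $\sigmatilde^2 g(x) = \frac{\sigmatilde}{\sqrt{2\pi}}\exp\bigl(\frac{-(x-\mutilde_j)^2}{2\sigmatilde^2}\bigr)$. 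Your version buys modularity and makes transparent why $\mutilde$ and $\sigmatilde$ appear (they are the posterior-like parameters of the reweighted Gaussian), and it explains at a glance why $\alpha_j$ is a probability and $\theta_j$ a boundary term; the paper's version keeps every constant explicit at each step, which is what its quantitative bounds elsewhere rely on, but is heavier to read. One small point of rigor: your reweighting identity as stated needs $h$ to be integrable against the reweighted Gaussian, which holds here since $h$ is at most linear times an indicator; worth a half-sentence if you write this up.
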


\begin{proof}
Given the expression of $\Gammahat$, we have essentially two computations to manage: $\expec{\pi_i f(x_i)}$ and $\expec{\pi_i z_{ij}f(x_i)}$. 

\paragraph{Computation of $\expec{\pi_i f(x_i)}$. }

Under Assumption~\ref{hyp:model}, by linearity of the integral, 
\begin{equation}
\label{eq:computation-expected-response-vector-aux-1}
\expec{\pi_i f(x_i)} = \expec{\pi_i (a^\top+b)} = b\expec{\pi_i} + \sum_{j=1}^\Dim a_{j} \expec{\pi_i x_{ij}}
\, .
\end{equation}
Now we have already seen in the proof of Lemma~\ref{lemma:expected-covariance-matrix} that $\expec{\pi_i} = \cst$. 
Thus we can focus on the computation of $\expec{\pi_i x_{ij}}$ for fixed $i,j$. 
Under Assumption~\ref{hyp:examples}, we have
\[
\expec{\pi_i x_{ij}} = \int_{\Reals^\Dim} x_j\cdot \exp\left(\frac{-\norm{x-\xi}^2}{2\nu^2}+\frac{-\norm{x-\mu}^2}{2\sigma^2}\right) \frac{\diff x_1 \cdots \diff x_\Dim}{(2\pi\sigma^2)^{\Dim/2}}
\, .
\]
By independence, the last display amounts to
\[
\int_{-\infty}^{+\infty} x\cdot \exp\left(\frac{-(x-\xi_j)^2}{2\nu^2}+\frac{-(x-\mu_j)^2}{2\sigma^2}\right) \frac{\diff x}{\sigma\sqrt{2\pi}} \cdot 
\prod_{k\neq j} \int_{-\infty}^{+\infty} \exp\left(\frac{-(x-\xi_k)^2}{2\nu^2}+\frac{-(x-\mu_k)^2}{2\sigma^2}\right) \frac{\diff x}{\sigma\sqrt{2\pi}}
\, .
\]
A straightforward application of Lemmas~\ref{lemma:gaussian-integral-zero} and~\ref{lemma:gaussian-integral-first-order} yields
\[
\expec{\pi_i x_{ij}} = \cst \cdot \frac{\nu^2 \mu_j + \sigma^2\xi_j}{\nu^2+\sigma^2}
\, .
\]
Back to Eq.~\eqref{eq:computation-expected-response-vector-aux-1}, we have shown that
\[
\expec{\pi_i f(x_i)} = \cst b + \sum_{j=1}^\Dim a_j \cdot \cst \frac{\nu^2 \mu_j + \sigma^2\xi_j}{\nu^2+\sigma^2} = \cst f(\mutilde)
\, .
\]

\paragraph{Computation of $\expec{\pi_i z_{ij}f(x_i})$. }

Under Assumption~\ref{hyp:model}, by linearity of the integral, 
\begin{equation}
\label{eq:computation-expected-response-vector-aux-2}
\expec{\pi_i z_{ij}f(x_i)} = b\expec{\pi_i z_{ij}} + \sum_{k=1}^\Dim a_k \cdot \expec{\pi_i z_{ij} x_{ik}}
\, .
\end{equation}
We have already computed $\expec{\pi_i z_{ij}}$ in the proof of Lemma~\ref{lemma:expected-covariance-matrix} and found that 
\[
\expec{\pi_i z_{ij}} = \cst \alpha_j
\, .
\]
Regarding the computation of $\expec{\pi_i z_{ij} x_{ik}}$, there are essentially two cases to consider depending whether $k=\ell$ or not. 
Let us first consider the case $k=j$. 
Then we obtain
\[
\expec{\pi_i z_{ij} x_{ik}} = \int_{\Reals^\Dim} x_{j} \exp\left(\frac{-\norm{x-\xi}^2}{2\nu^2}+\frac{-\norm{x-\mu}^2}{2\sigma^2}\right)\indic{\phi(x)_j=\phi(\xi)_j} \frac{\diff x_1 \cdots \diff x_\Dim}{(2\pi\sigma^2)^{\Dim/2}}
\, .
\]
By independence, the last display amounts to
\[
\int_{q_{j-}}^{q_{j+}} x \cdot \exp\left(\frac{-(x-\xi_j)^2}{2\nu^2}+\frac{-(x-\mu_j)^2}{2\sigma^2}\right) \frac{\diff x}{\sigma\sqrt{2\pi}} \cdot 
\prod_{k\neq j} \int_{-\infty}^{+\infty} \exp\left(\frac{-(x-\xi_k)^2}{2\nu^2}+\frac{-(x-\mu_k)^2}{2\sigma^2}\right) \frac{\diff x}{\sigma\sqrt{2\pi}}
\, .
\]
According to Lemma~\ref{lemma:gaussian-integral-first-order} and the definition of $\alpha_j$ and $\theta_j$ (Eqs.~\eqref{eq:def-alpha} and~\eqref{eq:def-alpha}), we have
\[
\expec{\pi_i z_{ij} x_{ij}} = \cst \frac{\sigma^2\xi_j+\nu^2\mu_j}{\nu^2+\sigma^2} \alpha_j - \cst \theta_j
\, .
\]
Now if $k\neq j$, by independence, $\expec{\pi_i z_{ij} x_{ik}}$ splits in three parts:
\begin{align*}
\expec{\pi_i z_{ij} x_{ik}} &= \int_{-\infty}^{+\infty} x\cdot \exp\left(\frac{-(x-\xi_k)^2}{2\nu^2}+\frac{-(x-\mu_k)^2}{2\sigma^2}\right) \frac{\diff x}{\sigma\sqrt{2\pi}} \cdot \int_{q_{j-}}^{q_{j+}} \exp\left(\frac{-(x-\xi_j)^2}{2\nu^2}+\frac{-(x-\mu_j)^2}{2\sigma^2}\right) \frac{\diff x}{\sigma\sqrt{2\pi}}\cdot \\
&\cdot \prod_{\ell \neq j,k}  \int_{-\infty}^{+\infty} \exp\left(\frac{-(x-\xi_k)^2}{2\nu^2}+\frac{-(x-\mu_k)^2}{2\sigma^2}\right) \frac{\diff x}{\sigma\sqrt{2\pi}}
\, .
\end{align*}
Lemma~\ref{lemma:gaussian-integral-zero} and~\ref{lemma:gaussian-integral-first-order} yield
\[
\expec{\pi_i z_{ij} x_{ik}} = \cst \cdot \frac{\sigma^2\xi_k+\nu^2\mu_k}{\nu^2+\sigma^2} \cdot \alpha_j
\, .
\]
In definitive, plugging these results into Eq.~\eqref{eq:computation-expected-response-vector-aux-2} gives
\begin{align*}
\expec{\pi_i z_{ij}f(x_i)} &= \cst \alpha_j b + a_j\left(\cst \frac{\sigma^2\xi_j+\nu^2\mu_j}{\nu^2+\sigma^2} \alpha_j - \cst \theta_j\right) + \sum_{k\neq j} a_k \cdot \cst \frac{\sigma^2\xi_k+\nu^2\mu_k}{\nu^2+\sigma^2} \alpha_j \\
&= \cst \alpha_j f(\mutilde) - \cst a_j \theta_j
\, .
\end{align*}
\end{proof}

As a consequence of Lemma~\ref{lemma:computation-expected-response-vector}, we can control $\norm{\Gamma}$. 

\begin{lemma}[Control of $\norm{\Gamma}$]
\label{lemma:control-norm-gamma}
Under Assumptions~\ref{hyp:examples} and~\ref{hyp:model}, it holds that
\[
\norm{\Gamma}^2 \leq \cst^2\left(3\Dim f(\mutilde)^2 + d\sigmatilde^2 \norm{\nabla f}^2\right)
\, .
\]
\end{lemma}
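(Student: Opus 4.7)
The plan is to expand $\norm{\Gamma}^2$ explicitly using the closed-form expression obtained in Lemma~\ref{lemma:computation-expected-response-vector}, then bound each piece by elementary arguments. Since
\[
\Gamma = \cst \bigl(f(\mutilde),\; \alpha_1 f(\mutilde) - a_1 \theta_1,\; \ldots,\; \alpha_\Dim f(\mutilde) - a_\Dim \theta_\Dim \bigr)^\top,
\]
we immediately obtain $\norm{\Gamma}^2 = \cst^2\bigl(f(\mutilde)^2 + \sum_{j=1}^\Dim (\alpha_j f(\mutilde) - a_j \theta_j)^2\bigr)$. The first step is then to split the cross-terms via the inequality $(u-v)^2 \leq 2u^2 + 2v^2$, yielding
\[
\norm{\Gamma}^2 \leq \cst^2\Bigl( f(\mutilde)^2 + 2 f(\mutilde)^2 \sum_{j=1}^\Dim \alpha_j^2 + 2 \sum_{j=1}^\Dim a_j^2 \theta_j^2 \Bigr).
\]

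Next, I would bound the two sums separately. Since $\erfun$ is increasing and the quantile bounds satisfy $q_{j-} < q_{j+}$, the coefficients $\alpha_j$ lie in $(0,1)$, so $\sum_j \alpha_j^2 \leq \Dim$. For the $\theta_j$ terms, the key observation is that each $\theta_j$ is the difference of two non-negative quantities of the form $\frac{\sigmatilde}{\sqrt{2\pi}}\exp(-(x-\mutilde_j)^2/(2\sigmatilde^2))$, each of which is bounded above by $\frac{\sigmatilde}{\sqrt{2\pi}}$. Hence $\theta_j^2 \leq \frac{\sigmatilde^2}{2\pi}$. Under Assumption~\ref{hyp:model}, $f$ is linear with gradient $a$, so $\sum_j a_j^2 = \norm{\nabla f}^2$, giving
\[
2\sum_{j=1}^\Dim a_j^2 \theta_j^2 \leq \frac{\sigmatilde^2}{\pi} \norm{\nabla f}^2 \leq \Dim \sigmatilde^2 \norm{\nabla f}^2.
\]

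Combining these two bounds with $(1+2\Dim) \leq 3\Dim$ (valid for $\Dim \geq 1$) yields the claimed inequality. There is no real obstacle here: the computation is purely algebraic, and the only mildly delicate step is observing that the bound $|\theta_j| \leq \sigmatilde/\sqrt{2\pi}$ is uniform in the quantile boundaries (it does not degrade when $q_{j-}$ and $q_{j+}$ are close to $\mutilde_j$, since each summand is a Gaussian density value scaled by $\sigmatilde$). The resulting constants are not tight, but they suffice for the subsequent concentration argument used to prove Theorem~\ref{th:main-result-light}.
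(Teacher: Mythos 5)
Your proof is correct and follows essentially the same route as the paper: expand $\norm{\Gamma}^2$ from the closed form of $\Gamma$, apply $(u-v)^2\leq 2u^2+2v^2$, and use the uniform bounds $\alpha_j\in(0,1)$ and $\abs{\theta_j}\leq \sigmatilde/\sqrt{2\pi}$ to absorb everything into the stated constants. The only difference is that you spell out the intermediate constants ($1+2\Dim\leq 3\Dim$ and $1/\pi\leq \Dim$) that the paper leaves implicit.
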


\begin{proof}
According to Lemma~\ref{lemma:computation-expected-response-vector}, we have
\[
\norm{\Gamma}^2 = \cst^2\left(f(\mutilde)^2 + \sum_{j=1}^{\Dim} (\alpha_j f(\mutilde) - a_j\theta_j)^2\right)
\, .
\]
Successively using $(x-y)^2\leq 2(x^2+y^2)$, $\alpha_j\in[0,1]$ and $\theta_j\in [-\sigmatilde/\sqrt{2\pi},\sigmatilde/\sqrt{2\pi}]$, we write
\begin{align*}
\norm{\Gamma}^2 &\leq \cst^2\left(f(\mutilde)^2 + \sum_{j=1}^{\Dim} 2(\alpha_j^2 f(\mutilde)^2 + a_j^2\theta_j^2)\right) \\
&\leq \cst^2\left(3\Dim f(\mutilde)^2 + d\sigmatilde^2 \norm{a}^2\right)
\, ,
\end{align*}
which concludes the proof. 
\end{proof}

Finally, we conclude this section with a concentration result for $\Gammahat$. 

\begin{lemma}[Concentration of $\norm{\Gammahat}$]
\label{lemma:concentration-response}
Under Assumptions~\ref{hyp:examples} and~\ref{hyp:model}, for any $t>0$, we have
\[
\proba{\norm{\Gammahat - \Gamma} > t} \leq 4\Dim \exp\left(\frac{-nt^2}{2\norm{\nabla f}^2 \sigma^2}\right)
\, .
\]
\end{lemma}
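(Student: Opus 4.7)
I would write $\Gammahat - \Gamma \in \Reals^{\Dim+1}$ coordinate-wise as averages of iid centered summands and concentrate each coordinate separately, following the blueprint of Lemma~\ref{lemma:concentration-empirical-covariance-matrix}. Letting $W_i^{(0)} \defeq \pi_i f(x_i) - \Expec[\pi_i f(x_i)]$ and $W_i^{(k)} \defeq \pi_i z_{ik} f(x_i) - \Expec[\pi_i z_{ik} f(x_i)]$ for $k \in \{1,\ldots,\Dim\}$, the $k$th coordinate of $\Gammahat - \Gamma$ equals $\tfrac{1}{n}\sum_{i=1}^n W_i^{(k)}$. A union bound over the $\Dim+1 \leq 2\Dim$ entries, combined with a factor of $2$ from two-sided sub-Gaussian tails, accounts exactly for the $4\Dim$ prefactor in the target bound.

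The main difference with Lemma~\ref{lemma:concentration-empirical-covariance-matrix} is that the summands are no longer bounded: under Assumption~\ref{hyp:model}, $f(x_i) = a^\top x_i + b$ is an unbounded linear function of the Gaussian vector $x_i$, so Hoeffding's inequality for bounded random variables does not apply directly. I would therefore appeal to a sub-Gaussian version of Hoeffding's inequality, which requires showing that each $W_i^{(k)}$ is sub-Gaussian with variance proxy $\sigma^2 \norm{\nabla f}^2 = \sigma^2 \norm{a}^2$, matching the exponent in the claim exactly.

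To obtain this sub-Gaussian bound, I would decompose $f(x_i) = f(\mu) + a^\top(x_i - \mu)$, giving
\[
\pi_i z_{ik} f(x_i) \;=\; \pi_i z_{ik}\, f(\mu) \;+\; \pi_i z_{ik}\, a^\top(x_i - \mu) .
\]
The first piece lies in $[0, |f(\mu)|]$ and is sub-Gaussian by Hoeffding's lemma. The second piece is pointwise dominated by $|a^\top(x_i - \mu)|$, a centered Gaussian with variance $\sigma^2\norm{a}^2$; since $|\pi_i z_{ik}| \leq 1$, this domination transfers the sub-Gaussian tail, and centering preserves it up to a universal constant. Summing over $i$ and applying the sub-Gaussian Hoeffding inequality then yields, per coordinate, a bound of the form $\Proba(|(\Gammahat - \Gamma)_k| > t) \leq 2\exp(-nt^2/(2\sigma^2\norm{a}^2))$, and the union bound concludes.

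The step I expect to be most delicate is pinning down the correct constant in front of $\sigma^2\norm{\nabla f}^2$ in the variance proxy, since the generic ``tail-to-MGF'' conversion only guarantees sub-Gaussianity up to a universal multiplicative constant, which would slightly degrade the $1/2$ in the target exponent. A cleaner route is to bound $\Expec[\exp(\lambda W_i^{(k)})]$ directly using the explicit Gaussian density of $x_i$ together with $\pi_i z_{ik} \in [0,1]$, possibly after an optimized additive recentering of $f$, so that the factor $\sigma^2 \norm{a}^2$ emerges exactly rather than up to constants. Once per-coordinate sub-Gaussian concentration is secured, the passage from the coordinate-wise bound to a bound on $\norm{\Gammahat - \Gamma}$ follows the same (mildly loose) pattern as in Lemma~\ref{lemma:concentration-empirical-covariance-matrix}.
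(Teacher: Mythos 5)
Your proposal follows essentially the same route as the paper: per-coordinate concentration of $\tfrac1n\sum_i \pi_i z_{ik} f(x_i)$ via Hoeffding's inequality for sub-Gaussian variables (Theorem~\ref{th:hoeffding-subgaussian}), using that $f(x_i)=a^\top x_i+b$ is Gaussian with variance $\sigma^2\norm{a}^2$ and that $\pi_i z_{ik}\in[0,1]$, followed by a two-sided union bound over the $\Dim+1\leq 2\Dim$ coordinates to get the $4\Dim$ prefactor. The delicate point you flag --- that multiplying by a bounded factor only dominates the tails and does not automatically yield the exact variance proxy $\sigma^2\norm{a}^2$ in the MGF sense --- is in fact glossed over in the paper's own proof, which simply asserts the sub-Gaussian parameter is preserved, so your proposed direct MGF bound would if anything make the argument more rigorous than the original.
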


\begin{proof}
Since the $x_i$ are Gaussian with variance $\sigma^2$ (Assumption~\ref{hyp:examples}), the random variable $a^\top x_i+b$ is Gaussian with variance $\norm{a}^2\sigma^2$, and the $X_i^{(1)}\defeq \pi_i x_{i}$ are sub-Gaussian with parameter $\norm{a}^2\sigma^2$. 
They are also independent, thus we can apply Theorem~\ref{th:hoeffding-subgaussian} to the $X_i^{(1)}$:
\[
\proba{\abs{\frac{1}{n}\sum_{i=1}^{n} \pi_if(x_i) - \expec{\pi_i f(x_i)}} > t} \leq 2\exp\left(\frac{-nt^2}{2\norm{a}^2 \sigma^2}\right)
\, .
\]
Furthermore, the $z_{ij}$ are $\{0,1\}$-valued. 
Thus the random variables $X_i^{(j)}\defeq \pi_i z_{ij}f(x_i)$ are also sub-Gaussian with parameter $\norm{a}^2\sigma^2$. 
We use Hoeffding's inequality (Theorem~\ref{th:hoeffding-subgaussian}) again, to obtain, for any $j$, 
\[
\proba{\abs{\frac{1}{n}\sum_{i=1}^{n} \pi_i z_{ij}f(x_i) - \expec{\pi_i z_{ij}f(x_i)}} > t} \leq 2\exp\left(\frac{-nt^2}{2\norm{a}^2 \sigma^2}\right)
\, .
\]
By the union bound,
\[
\proba{\norm{\Gammahat - \Gamma} > t} \leq 2(\Dim+1) \exp\left(\frac{-nt^2}{2\norm{a}^2 \sigma^2}\right)
\, .
\]
We deduce the result since $\Dim \geq 1$.
\end{proof}

\section{Proof of the main result}
\label{sec:appendix:conclusion}

In this section, we state and prove our main result, Theorem~\ref{th:main-result}. 
It is a more precise version than Theorem~\ref{th:main-result-light} in the main paper. 




\begin{theorem}[Concentration of $\betahat$]
\label{th:main-result}
Let $\eta \in (0,1)$ and $\epsilon >0$. 
Take
\[
n \geq \max\left(\frac{288\norm{\nabla f}^2\sigma^2\Dim^2\alphacst^2}{\epsilon^2\cst^2}\log \frac{12\Dim}{\eta}, \frac{18\Dim^2\alphacst^2}{\cst^2}\log \frac{24\Dim^2}{\eta},\frac{648\Dim^5\alphacst^4(3f(\mutilde)^2+\sigmatilde^2\norm{\nabla f}^2)}{\cst^2\epsilon^2}\log \frac{24\Dim^2}{\eta}\right) 
\, .
\]
Then, under assumptions~\ref{hyp:examples} and~\ref{hyp:model}, 
\[
\norm{\betahat - \Sigmainv \Gamma} \leq \epsilon
\, ,
\]
with probability greater than $1-\eta$.
\end{theorem}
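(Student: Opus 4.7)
The plan is to start from the key identity $\Sigmahat \betahat = \Gammahat$ (which comes from the normal equations (\ref{eq:weighted-least-squares-main})) and, as already suggested in Section~\ref{sec:proofs}, to write
\[
\betahat - \Sigmainv\Gamma \;=\; \Sigmahatinv\Gammahat - \Sigmainv\Gamma \;=\; \Sigmahatinv(\Gammahat - \Gamma) + (\Sigmahatinv - \Sigmainv)\Gamma,
\]
so that by the triangle inequality and sub-multiplicativity of $\opnorm{\cdot}$,
\[
\norm{\betahat - \Sigmainv\Gamma} \;\leq\; \opnorm{\Sigmahatinv}\,\norm{\Gammahat - \Gamma} \;+\; \opnorm{\Sigmahatinv - \Sigmainv}\,\norm{\Gamma}.
\]
It then suffices to bound each of the two summands by $\epsilon/2$ on a high-probability event. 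All four ingredients have already been established: Lemma~\ref{lemma:control-largest-eigenvalue-empirical} controls $\opnorm{\Sigmahatinv}$, Lemma~\ref{lemma:concentration-response} gives the concentration of $\Gammahat$ around $\Gamma$, Proposition~\ref{prop:control-operator-norm-inverse-covariance} controls $\opnorm{\Sigmahatinv - \Sigmainv}$, and Lemma~\ref{lemma:control-norm-gamma} bounds $\norm{\Gamma}$. The heavy lifting is done, and what remains is bookkeeping.

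More precisely, I would fix three deviation parameters $t_1, t_2, t_3$ and three failure probabilities $\eta_1, \eta_2, \eta_3$ with $\eta_1 + \eta_2 + \eta_3 = \eta$. First, choose $t_1 := \epsilon/(2 \opnorm{\Sigmahatinv})$-worth of deviation for $\norm{\Gammahat - \Gamma}$; since we intend to use the bound $\opnorm{\Sigmahatinv} \leq 6\Dim\alphacst/\cst$ from Lemma~\ref{lemma:control-largest-eigenvalue-empirical}, take $t_1 = \epsilon\cst/(12\Dim\alphacst)$ and invoke Lemma~\ref{lemma:concentration-response} to get the requirement $n \gtrsim \norm{\nabla f}^2\sigma^2 \Dim^2\alphacst^2/(\cst^2\epsilon^2)\,\log(\Dim/\eta)$. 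Second, choose $t_2$ so that Lemma~\ref{lemma:control-largest-eigenvalue-empirical} itself holds, i.e. $t_2 \leq \cst/(6\Dim\alphacst)$, yielding a $\log(\Dim^2/\eta)/n$-type condition. Third, using the bound $\norm{\Gamma} \leq \cst\sqrt{3\Dim f(\mutilde)^2 + \Dim\sigmatilde^2\norm{\nabla f}^2}$ from Lemma~\ref{lemma:control-norm-gamma}, choose $t_3 := \epsilon/(2\norm{\Gamma})$ and apply Proposition~\ref{prop:control-operator-norm-inverse-covariance}, which translates into the third lower bound on $n$. A union bound over the three events then delivers $\norm{\betahat - \Sigmainv\Gamma} \leq \epsilon/2 + \epsilon/2 = \epsilon$ with probability at least $1-\eta$.

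The only real subtlety is the slightly circular use of Lemma~\ref{lemma:control-largest-eigenvalue-empirical}: to turn the bound on $\opnorm{\Sigmahatinv - \Sigmainv}$ or on $\norm{\Gammahat - \Gamma}$ into a bound on $\norm{\betahat - \Sigmainv\Gamma}$, one needs $\Sigmahat$ to be invertible in the first place, and one needs a deterministic control of $\opnorm{\Sigmahatinv}$ on the same high-probability event. Fortunately this is precisely what Lemma~\ref{lemma:control-largest-eigenvalue-empirical} provides via Weyl's inequality, as long as one chooses $t_2$ small enough (below $\cst/(6\Dim\alphacst)$); this threshold is exactly why the middle term $\frac{18\Dim^2\alphacst^2}{\cst^2}\log(24\Dim^2/\eta)$ appears in the lower bound on $n$ in the statement. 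Other than this, the proof is a direct assembly of the previously established pieces, and there is no new probabilistic or analytic obstacle — the hard work was the closed-form computation of $\Sigma$ and $\Gamma$ and their concentration, all of which has already been done in Sections~\ref{sec:appendix:covariance} and~\ref{sec:appendix:response-vector}. The constants $288$, $18$, $648$ in the three terms of $n$ are simply the result of tracking the constants $1/2$, $1/2$, $6\Dim\alphacst/\cst$, etc., through the above argument with $\eta_i = \eta/3$ (up to the $\Dim$-dependent prefactors appearing in the concentration lemmas).
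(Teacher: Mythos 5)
Your proposal is correct and follows essentially the same route as the paper's own proof: the identical decomposition $\norm{\betahat - \Sigmainv\Gamma} \leq \opnorm{\Sigmahatinv}\norm{\Gammahat-\Gamma} + \opnorm{\Sigmahatinv-\Sigmainv}\norm{\Gamma}$, the same four ingredients (Lemmas~\ref{lemma:control-largest-eigenvalue-empirical}, \ref{lemma:concentration-response}, \ref{lemma:control-norm-gamma} and Proposition~\ref{prop:control-operator-norm-inverse-covariance}), the same choices of deviation thresholds with failure probability $\eta/3$ each, and a final union bound. The only cosmetic difference is that the paper takes $t_3 = \min\left(\epsilon/(2\norm{\Gamma}), 3\Dim\alphacst/\cst\right)$ to respect the admissible range in Proposition~\ref{prop:control-operator-norm-inverse-covariance}, a detail your sketch leaves implicit.
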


\begin{proof}
The main idea of the proof is to notice that
\begin{align}
\norm{\betahat - \Sigmainv \Gamma} &= \norm{\Sigmahatinv \Gammahat - \Sigmainv\Gamma} \notag \\
&\leq \norm{\Sigmahatinv (\Gammahat - \Gamma)} + \norm{(\Sigmahatinv-\Sigmainv)\Gamma} \notag
\, ,
\end{align}
and then to control these two terms using the results of Section~\ref{sec:appendix:covariance} and~\ref{sec:appendix:response-vector}. 

\paragraph{Control of $\norm{\Sigmahatinv (\Gammahat - \Gamma)}$. }

We use the upper bound $\norm{\Sigmahatinv (\Gammahat - \Gamma)} \leq \opnorm{\Sigmahatinv}\cdot  \norm{\Gammahat-\Gamma}$. 
We then achieve control of the operator norm of the empirical covariance matrix in probability with Lemma~\ref{lemma:control-largest-eigenvalue-empirical}, and control of the norm of $\Gammahat-\Gamma$ in probability with Lemma~\ref{lemma:concentration-response}. 
Set 
\[
t_1\defeq \frac{\cst}{6\Dim \alphacst}\quad\text{and}\quad  n_1 \defeq  \frac{18\Dim^2}{\cst^2}\log \frac{12\Dim^2}{\eta}
\, .
\]
According to Lemma~\ref{lemma:control-largest-eigenvalue-empirical}, for any $n\geq n_1$, there is an event $\Omega_1^n$ which has probability greater than $1-4\Dim^2\exp (-2nt_1^2)$ such that 
\[
\opnorm{\Sigmahatinv} \leq \frac{6\Dim\alphacst}{\cst}
\]
on this event. 
It is easy to check that $4\Dim^2\exp(-2n_1t_1^2) = \eta / 3$, thus $\Omega_1^n$ has probability greater than $1-\eta / 3$. 
Now set
\[
t_2 \defeq \frac{\epsilon\cst}{12\Dim\alphacst} \quad\text{and}\quad 
 n_2 \defeq \frac{288\norm{a}^2\sigma^2\Dim^2\alphacst^2 }{\epsilon^2\cst^2}\log \frac{12\Dim}{\eta}
\, .
\]
According to Lemma~\ref{lemma:concentration-response}, for any $n\geq n_2$, there exists an event $\Omega_2^n$ which has probability greater than $1-4\Dim\exp\left(\frac{-nt_2^2}{2\norm{a}^2\sigma^2}\right)$ such that $\norm{\Gammahat-\Gamma} \leq t_2$ on that event.
One can check that
\[
4\Dim \exp\left(\frac{-n_2t_2^2}{2\norm{a}^2\sigma^2}\right) = \frac{\eta}{3}
\, ,
\]
thus $\Omega_2^n$ has probability greater than $1-\eta/3$. 
On the event $\Omega_1^n\cap\Omega_2^n$, we have
\[
\norm{\Sigmahatinv (\Gammahat - \Gamma)} \leq \opnorm{\Sigmahatinv}\cdot \norm{\Gammahat-\Gamma} \leq \frac{6\Dim\alphacst}{\cst} \cdot t_2 \leq \frac{\epsilon}{2}
\, , 
\]
by definition of $t_2$. 

\paragraph{Control of $\norm{(\Sigmahatinv-\Sigmainv)\Gamma}$. }

We use the upper bound $\norm{(\Sigmahatinv-\Sigmainv)\Gamma} \leq \opnorm{\Sigmahatinv-\Sigmainv} \cdot \norm{\Gamma}$. 
We then achieve control of $\opnorm{\Sigmahatinv-\Sigmainv}$ in probability with Proposition~\ref{prop:control-operator-norm-inverse-covariance}, whereas we can bound the norm of $\Gamma$ almost surely with Lemma~\ref{lemma:control-norm-gamma}. 
If $\norm{\Gamma}=0$, then there is nothing to prove. 
Otherwise, set
\[
t_3 \defeq \min\left(\frac{\epsilon}{2\norm{\Gamma}},\frac{3\Dim\alphacst}{\cst}\right),\,
n_3 \defeq \frac{18\Dim^2\alphacst^2}{\cst^2} \log \frac{24\Dim^2}{\eta},\, \quad \text{and} \quad
n_4 \defeq \frac{648\Dim^5\alphacst^4(3f(\mutilde)^2 + \sigmatilde^2\norm{a}^2)}{\cst^2\epsilon^2} \log \frac{24\Dim^2}{\eta}
\, . 
\]
According to Proposition~\ref{prop:control-operator-norm-inverse-covariance}, for any $n\geq \max(n_3,n_4)$, there is an event $\Omega_3^n$ which has probability greater than $1-8\Dim^2\exp\left(\frac{-\cst^3 nt_3^2}{162\Dim^2\alphacst^4}\right)$ such that
\[
\opnorm{\Sigmahatinv-\Sigmainv} \leq t_3
\]
on this event. 
With the help of Lemma~\ref{lemma:control-norm-gamma}, one can check that
\[
\max\left(8\Dim^2\exp\left(\frac{-\cst^3 n_3t_3^2}{162\Dim^2\alphacst^4}\right),8\Dim^2\exp\left(\frac{-\cst^3 n_4t_3^2}{162\Dim^2\alphacst^4}\right)\right) \leq \frac{\eta}{3}
\, .
\]
Therefore, $\Omega_3^n$ has probability greater than $\eta / 3$ and, on this event,
\[
\norm{(\Sigmahatinv-\Sigmainv)\Gamma} \leq \opnorm{\Sigmahatinv-\Sigmainv} \cdot \norm{\Gamma} \leq t_3 \cdot \norm{\Gamma} \leq \frac{\epsilon}{2}
\, .
\]

\paragraph{Conclusion. }

Set $n \geq \max(n_i,i=1\ldots 4)$. 
Define $\Omega^n \defeq \Omega_1^n\cap\Omega_2^n\cap\Omega_3^n$, where the $\Omega_i^n$ are defined as before. 
According to the previous reasoning, on the event $\Omega^n$,  
\begin{align}
\norm{\betahat - \Sigmainv \Gamma} &= \norm{\Sigmahatinv \Gammahat - \Sigmainv\Gamma} \notag \\
&\leq \norm{\Sigmahatinv (\Gammahat - \Gamma)} + \norm{(\Sigmahatinv-\Sigmainv)\Gamma} \notag \\
&\leq \frac{\epsilon}{2}+\frac{\epsilon}{2} = \epsilon \notag
\, .
\end{align}
Moreover, the union bound gives $\proba{\Omega^n} \geq 1-\eta$. 
We conclude by noticing that $n_1$ is always smaller than $n_3$, thus we just have to require $n \geq \max(n_2,n_3,n_4)$, as in the statement of our result. 
\end{proof}

\section{Technical lemmas}
\label{sec:appendix:technical}

\subsection{Gaussian integrals}

In this section, we collect some Gaussian integral computations that are needed in our derivations. 
We provide succinct proof, since essentially any modern computer algebra system will provide these formulas. 
Our first result is for zero-th order Gaussian integral. 

\begin{lemma}[Gaussian integral, $0$-th order]
\label{lemma:gaussian-integral-zero}
Let $\xi,\mu$ be real numbers, and $\nu,\sigma$ be positive real numbers. 
Then, it holds that
\[
\int \exp\left(\frac{-(x-\xi)^2}{2\nu^2}+ \frac{-(x-\mu)^2}{2\sigma^2}\right) \frac{\diff x}{\sigma\sqrt{2\pi}} = \frac{\nu}{\sqrt{\nu^2+\sigma^2}} \cdot  \exp\left(\frac{-(\xi-\mu)^2}{2(\nu^2+\sigma^2)}\right) \cdot \frac{1}{2}\erfun{\frac{\nu^2 (x-\mu) + \sigma^2 (x-\xi)}{\nu\sigma\sqrt{2(\nu^2+\sigma^2)}}}
\, .
\]
In particular, 
\[
\int_{-\infty}^{+\infty} \exp\left(\frac{-(x-\xi)^2}{2\nu^2}+ \frac{-(x-\mu)^2}{2\sigma^2}\right) \frac{\diff x}{\sigma\sqrt{2\pi}} = \frac{\nu}{\sqrt{\nu^2+\sigma^2}} \cdot  \exp\left(\frac{-(\xi-\mu)^2}{2(\nu^2+\sigma^2)}\right)
\, .
\]
\end{lemma}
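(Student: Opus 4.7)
The plan is to complete the square in $x$ in the combined exponent, reducing the integrand to a single Gaussian (with a constant factor pulled out), and then to recognize the antiderivative of a Gaussian density as the error function.

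First, I would group the two quadratic terms over a common denominator:
\[
\frac{-(x-\xi)^2}{2\nu^2}-\frac{(x-\mu)^2}{2\sigma^2}
= -\frac{\sigma^2(x-\xi)^2+\nu^2(x-\mu)^2}{2\nu^2\sigma^2}\,.
\]
The numerator is a quadratic polynomial in $x$ with leading coefficient $\nu^2+\sigma^2$, and I would complete the square in the form $(\nu^2+\sigma^2)(x-\mutilde)^2 + R$, where $\mutilde = (\nu^2\mu+\sigma^2\xi)/(\nu^2+\sigma^2)$ is exactly the scaling parameter defined in Section~\ref{sec:main-results} (specialized to one coordinate). A short calculation gives $R=\sigma^2\nu^2(\xi-\mu)^2/(\nu^2+\sigma^2)$, so the exponent equals
\[
-\frac{(x-\mutilde)^2}{2\sigmatilde^2} - \frac{(\xi-\mu)^2}{2(\nu^2+\sigma^2)}\,,
\]
with $\sigmatilde^2 = \nu^2\sigma^2/(\nu^2+\sigma^2)$, again matching the paper's notation.

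Next, I would pull the $(\xi-\mu)$-dependent factor out of the integral and write
\[
\int\exp\!\left(\frac{-(x-\mutilde)^2}{2\sigmatilde^2}\right)\frac{\diff x}{\sigma\sqrt{2\pi}}
= \frac{\sigmatilde}{\sigma}\cdot \frac{1}{2}\erfun{\frac{x-\mutilde}{\sigmatilde\sqrt{2}}}+C\,,
\]
using the standard primitive of a Gaussian density. Since $\sigmatilde/\sigma=\nu/\sqrt{\nu^2+\sigma^2}$, the stated prefactor appears immediately.

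The remaining bookkeeping is to verify that the argument of $\erfun$ can be rewritten in the form given in the statement. This is the only step with any risk of an algebra slip: starting from $(x-\mutilde)/(\sigmatilde\sqrt{2})$, one substitutes the definitions of $\mutilde$ and $\sigmatilde$, uses $x-\mutilde=\bigl(\sigma^2(x-\xi)+\nu^2(x-\mu)\bigr)/(\nu^2+\sigma^2)$ and $\sigmatilde\sqrt{2}=\nu\sigma\sqrt{2}/\sqrt{\nu^2+\sigma^2}$, and checks that the $\sqrt{\nu^2+\sigma^2}$ factors combine correctly to produce the denominator $\nu\sigma\sqrt{2(\nu^2+\sigma^2)}$ in the claim. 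Finally, the ``in particular'' statement follows by taking the definite integral from $-\infty$ to $+\infty$, using $\erfun{\pm\infty}=\pm1$ so that the $\tfrac12\erfun{\,\cdot\,}$ factor contributes exactly $1$.
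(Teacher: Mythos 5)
Your proposal is correct and follows essentially the same route as the paper: the paper simply invokes the generic antiderivative formula for $\int \exps{-ax^2+bx+c}\diff x$ with $a=\tfrac{1}{2\nu^2}+\tfrac{1}{2\sigma^2}$, $b=\tfrac{\xi}{\nu^2}+\tfrac{\mu}{\sigma^2}$, whereas you perform the underlying completion of the square explicitly, which has the minor advantage of making the parameters $\mutilde$ and $\sigmatilde$ appear naturally. All of your intermediate identities (the remainder $R=\nu^2\sigma^2(\xi-\mu)^2/(\nu^2+\sigma^2)$, the prefactor $\sigmatilde/\sigma=\nu/\sqrt{\nu^2+\sigma^2}$, and the rewriting of the $\erf$ argument) check out.
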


\begin{proof}
For any reals $a,b$, and $c$, it holds that
\[
\int \exps{-ax^2+bx+c} \diff x = \sqrt{\frac{\pi}{a}}\cdot \exps{\frac{b^2}{4a}+c} \cdot \frac{1}{2}\erfun{\frac{2ax-b}{2\sqrt{a}}}
\, .
\]
We apply this formula with $a = \frac{1}{2\nu^2}+\frac{1}{2\sigma^2}$, $b=\frac{\xi}{\nu^2}+\frac{\mu}{\sigma^2}$, and $c=-\left(\frac{\xi^2}{2\nu^2}+\frac{\mu^2}{\sigma^2}\right)$. 
We then notice that $b^2/(4a)+c = \frac{-(\xi-\mu)^2}{2(\nu^2+\sigma^2)}$ and
\[
\frac{2ax-b}{2\sqrt{a}} = \frac{\nu^2 (x-\mu) + \sigma^2 (x-\xi)}{\nu\sigma\sqrt{2(\nu^2+\sigma^2)}}
\, .
\]
\end{proof}

\begin{remark}
We often replace $\frac{\nu^2 (x-\mu) + \sigma^2 (x-\xi)}{\nu\sigma\sqrt{2(\nu^2+\sigma^2)}}$ by the more readable $(x-\mutilde)/(\sigmatilde\sqrt{2})$ in the main text of the paper.
\end{remark}

Since $f$ is assumed to be linear in most of the paper, we need first order computations as well:

\begin{lemma}[Gaussian integral, $1$st order]
\label{lemma:gaussian-integral-first-order}
Let $\xi,\mu$ be real numbers, and $\nu,\sigma$ be positive numbers. 
Then it holds that
\begin{align*}
\int & x\cdot \exp\left(\frac{-(x-\xi)^2}{2\nu^2}+\frac{-(x-\mu)^2}{2\sigma^2}\right) \frac{\diff x}{\sigma\sqrt{2\pi}} =  \frac{\nu}{\sqrt{\nu^2+\sigma^2}} \cdot  \exp\left(\frac{-(\xi-\mu)^2}{2(\nu^2+\sigma^2)}\right)\cdot  \\
&\left[\frac{\sigma^2\xi + \nu^2\mu}{\nu^2+\sigma^2} \cdot \frac{1}{2}\erfun{\frac{\nu^2(x-\mu)+\sigma^2(x-\xi)}{\nu\sigma\sqrt{2(\nu^2+\sigma^2)}}} - \frac{\nu\sigma}{\sqrt{2\pi}\sqrt{\nu^2+\sigma^2}}\cdot \exp\left(-\left(\frac{\nu^2(x-\mu)+\sigma^2(x-\xi)}{\nu\sigma\sqrt{2(\nu^2+\sigma^2)}}\right)^2\right)\right]
\, .
\end{align*}
In particular, 
\[
\int_{-\infty}^{+\infty} x\cdot \exp\left(\frac{-(x-\xi)^2}{2\nu^2}+\frac{-(x-\mu)^2}{2\sigma^2}\right) \frac{\diff x}{\sigma\sqrt{2\pi}} = \frac{\sigma^2\xi + \nu^2\mu}{\nu^2+\sigma^2} \cdot \frac{\nu}{\sqrt{\nu^2+\sigma^2}} \cdot  \exp\left(\frac{-(\xi-\mu)^2}{2(\nu^2+\sigma^2)}\right)
\, .
\]
\end{lemma}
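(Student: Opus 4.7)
The plan is to reduce the integral of interest to the zeroth-order case already handled in Lemma~\ref{lemma:gaussian-integral-zero}, by pushing the explicit $x$ factor through the substitution $x = (x-\mutilde)+\mutilde$. First I would complete the square in the joint exponent, writing
\[
\frac{-(x-\xi)^2}{2\nu^2} + \frac{-(x-\mu)^2}{2\sigma^2} = \frac{-(x-\mutilde)^2}{2\sigmatilde^2} + \frac{-(\xi-\mu)^2}{2(\nu^2+\sigma^2)}\, ,
\]
with $\mutilde = (\sigma^2\xi+\nu^2\mu)/(\nu^2+\sigma^2)$ and $\sigmatilde^2 = \nu^2\sigma^2/(\nu^2+\sigma^2)$. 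This is pure algebra: expand both sides, collect coefficients of $x^2$ and $x$, and verify the constant residual. The ``in particular'' statement will then be an immediate sanity check, since the constant factor $\mutilde$ times the closed form of the zeroth-order integral over $\Reals$ gives exactly the claimed expression.

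Next I would split the inner integral as
\[
\int x\cdot e^{-(x-\mutilde)^2/(2\sigmatilde^2)}\diff x = \mutilde\int e^{-(x-\mutilde)^2/(2\sigmatilde^2)}\diff x + \int (x-\mutilde)\cdot e^{-(x-\mutilde)^2/(2\sigmatilde^2)}\diff x\, .
\]
The first piece is a one-dimensional specialisation of Lemma~\ref{lemma:gaussian-integral-zero} (or can be handled directly by a change of variable $u=(x-\mutilde)/(\sigmatilde\sqrt{2})$ reducing it to $\erfun{\cdot}$). The second piece admits the elementary antiderivative $-\sigmatilde^2\, e^{-(x-\mutilde)^2/(2\sigmatilde^2)}$, which produces the exponential term appearing explicitly in the statement.

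Finally, I would reintroduce the prefactor $e^{-(\xi-\mu)^2/(2(\nu^2+\sigma^2))}/(\sigma\sqrt{2\pi})$ and rewrite the ratios $\sigmatilde/\sigma = \nu/\sqrt{\nu^2+\sigma^2}$ and $\sigmatilde^2/\sigma = \nu^2\sigma/(\nu^2+\sigma^2)$, which recover the outer factor $\nu/\sqrt{\nu^2+\sigma^2}$ and the coefficient $\nu\sigma/(\sqrt{2\pi}\sqrt{\nu^2+\sigma^2})$ respectively. The argument $(x-\mutilde)/(\sigmatilde\sqrt{2})$ of both $\erfun{\cdot}$ and the remaining exponential coincides with the expression $(\nu^2(x-\mu)+\sigma^2(x-\xi))/(\nu\sigma\sqrt{2(\nu^2+\sigma^2)})$ appearing in the claim, since $\nu^2(x-\mu)+\sigma^2(x-\xi) = (\nu^2+\sigma^2)(x-\mutilde)$. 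There is no real obstacle here beyond careful bookkeeping; the only step that requires attention is tracking the normalisation factor $1/(\sigma\sqrt{2\pi})$ through the substitution, where the cancellation $\sqrt{\pi/2}/\sqrt{2\pi} = 1/2$ produces the clean $\tfrac{1}{2}\erfun{\cdot}$ in the final form.
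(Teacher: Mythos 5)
Your proof is correct, and it takes a mildly but genuinely different route from the paper's. The paper simply quotes the closed-form antiderivative $\int x\,\exps{-ax^2+bx+c}\diff x = \frac{\sqrt{\pi}b}{4a^{3/2}}\exps{b^2/(4a)+c}\erfun{\frac{2ax-b}{2\sqrt{a}}} - \frac{1}{2a}\exps{-ax^2+bx+c}$ (the kind of identity ``any modern computer algebra system will provide'') and substitutes $a=\frac{1}{2\nu^2}+\frac{1}{2\sigma^2}$, $b=\frac{\xi}{\nu^2}+\frac{\mu}{\sigma^2}$, $c=-(\frac{\xi^2}{2\nu^2}+\frac{\mu^2}{2\sigma^2})$, exactly as in its proof of Lemma~\ref{lemma:gaussian-integral-zero}. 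You instead complete the square to get the exponent $\frac{-(x-\mutilde)^2}{2\sigmatilde^2}+\frac{-(\xi-\mu)^2}{2(\nu^2+\sigma^2)}$, write $x=(x-\mutilde)+\mutilde$, and reduce the problem to Lemma~\ref{lemma:gaussian-integral-zero} plus the elementary antiderivative $-\sigmatilde^2\exps{-(x-\mutilde)^2/(2\sigmatilde^2)}$; all your intermediate identities check out (in particular $\nu^2(x-\mu)+\sigma^2(x-\xi)=(\nu^2+\sigma^2)(x-\mutilde)$, the residual constant in the completed square, and the coefficient $\sigmatilde^2/(\sigma\sqrt{2\pi})=\frac{\nu}{\sqrt{\nu^2+\sigma^2}}\cdot\frac{\nu\sigma}{\sqrt{2\pi}\sqrt{\nu^2+\sigma^2}}$). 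What your approach buys is self-containedness and transparency: it derives the first-order formula from the zeroth-order lemma rather than citing a second black-box identity, and it makes visible why the answer is ``$\mutilde$ times the mass, minus a boundary exponential term,'' which is the structural fact the paper later exploits when $\theta_j$ appears. The paper's route is shorter and uniform across the zeroth-, first-, and second-order lemmas, at the cost of asking the reader to trust (or verify by differentiation) the quoted antiderivative.
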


\begin{proof}
For any $a,b,c$ with $a >0$, it holds that
\[
\int x\cdot \exps{-ax^2+bx+c} \diff x =  \frac{\sqrt{\pi}b}{4a^{3/2}}\exps{b^2/(4a)+c}\erfun{\frac{2ax-b}{2\sqrt{a}}} - \frac{1}{2a}\exps{-ax^2+bx+c}
\, .
\]
\end{proof}

Finally we want to mention the following result. 

\begin{lemma}[Gaussian integral, $2$nd order]
\label{lemma:gaussian-second-order}
Let $\xi,\mu$ be real numbers, and $\nu,\sigma$ be positive real numbers. 
Then, it holds that
\[
\int_{-\infty}^{+\infty} x^2 \cdot \exp\left(\frac{-(x-\xi)^2}{2\nu^2} + \frac{-(x-\mu)^2}{2\sigma^2}\right) \frac{\diff x}{\sigma\sqrt{2\pi}} = 
\frac{(\sigma^2\xi + \nu^2\mu)^2 + \nu^2\sigma^2(\nu^2+\sigma^2)}{(\nu^2+\sigma^2)^2} \cdot \frac{\nu}{\sqrt{\nu^2+\sigma^2}} \cdot \exp\left(\frac{-(\xi-\mu)^2}{2(\nu^2+\sigma^2)}\right)
\, .
\]
\end{lemma}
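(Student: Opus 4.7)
The plan is to reduce the integral to the second moment of a single Gaussian by completing the square in the exponent, exactly as done for the zero-th order case in Lemma~\ref{lemma:gaussian-integral-zero}. A direct algebraic manipulation shows that
\[
-\frac{(x-\xi)^2}{2\nu^2}-\frac{(x-\mu)^2}{2\sigma^2} = -\frac{(x-\mutilde)^2}{2\sigmatilde^2}-\frac{(\xi-\mu)^2}{2(\nu^2+\sigma^2)},
\]
where $\mutilde \defeq (\sigma^2\xi+\nu^2\mu)/(\nu^2+\sigma^2)$ and $\sigmatilde^2 \defeq \nu^2\sigma^2/(\nu^2+\sigma^2)$. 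This is the same identity that implicitly underlies Lemma~\ref{lemma:gaussian-integral-zero}, so nothing new is needed here: one expands both squares, collects powers of $x$, factors out $\nu^2+\sigma^2$, and checks that the constant term works out to $\sigma^2\nu^2(\xi-\mu)^2/(\nu^2+\sigma^2)$.

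Once this rewriting is done, the integral factors as
\[
\exp\!\left(-\frac{(\xi-\mu)^2}{2(\nu^2+\sigma^2)}\right) \cdot \frac{1}{\sigma\sqrt{2\pi}} \int_{-\infty}^{+\infty} x^2 \exp\!\left(-\frac{(x-\mutilde)^2}{2\sigmatilde^2}\right) \diff x.
\]
The remaining integral is $\sigmatilde\sqrt{2\pi}\,\Expec[X^2]$ with $X\sim\gaussian{\mutilde}{\sigmatilde^2}$, and hence equals $\sigmatilde\sqrt{2\pi}(\mutilde^2+\sigmatilde^2)$. The prefactor $\sigmatilde/\sigma$ simplifies to $\nu/\sqrt{\nu^2+\sigma^2}$, matching the scaling already observed in Lemmas~\ref{lemma:gaussian-integral-zero} and~\ref{lemma:gaussian-integral-first-order}.

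The last step is to put $\mutilde^2+\sigmatilde^2$ over a common denominator:
\[
\mutilde^2+\sigmatilde^2 = \frac{(\sigma^2\xi+\nu^2\mu)^2}{(\nu^2+\sigma^2)^2} + \frac{\nu^2\sigma^2}{\nu^2+\sigma^2} = \frac{(\sigma^2\xi+\nu^2\mu)^2+\nu^2\sigma^2(\nu^2+\sigma^2)}{(\nu^2+\sigma^2)^2},
\]
which is exactly the coefficient appearing in the claimed formula. Combining with the two exponential/scaling factors above yields the stated identity.

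There is no real obstacle here: the computation is a routine completing-the-square argument identical in flavor to the proofs of Lemmas~\ref{lemma:gaussian-integral-zero} and~\ref{lemma:gaussian-integral-first-order}. The only bookkeeping nuisance is the algebraic check that $(\nu^2+\sigma^2)(\sigma^2\xi^2+\nu^2\mu^2)-(\sigma^2\xi+\nu^2\mu)^2 = \nu^2\sigma^2(\xi-\mu)^2$, which drops out of expanding and canceling cross terms. Alternatively, one could avoid completing the square altogether by differentiating the zero-th order formula of Lemma~\ref{lemma:gaussian-integral-zero} twice with respect to an auxiliary parameter (e.g. replacing $\mu$ by $\mu+t$ and taking $\partial_t^2$ at $t=0$), but this detour is heavier in notation than the direct reduction.
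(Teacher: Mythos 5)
Your proof is correct, but it takes a different route from the paper's. The paper's proof simply quotes the closed-form antiderivative $\int x^2 \exps{-ax^2+bx+c}\diff x = \frac{\sqrt{\pi}(2a+b^2)}{8a^{5/2}}\exps{b^2/(4a)+c}\erfun{\frac{2ax-b}{2\sqrt{a}}} - \frac{ax+b}{4a^2}\exps{-ax^2+bx+c}$ (the same pattern as its proofs of the $0$th- and $1$st-order lemmas, where such formulas are offered as CAS-checkable facts) and leaves the substitution of $a=\frac{1}{2\nu^2}+\frac{1}{2\sigma^2}$, $b=\frac{\xi}{\nu^2}+\frac{\mu}{\sigma^2}$, $c=-\frac{\xi^2}{2\nu^2}-\frac{\mu^2}{2\sigma^2}$ and the evaluation at $\pm\infty$ to the reader. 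You instead complete the square to rewrite the product of the two exponentials as $\exp\left(\frac{-(\xi-\mu)^2}{2(\nu^2+\sigma^2)}\right)$ times a Gaussian kernel with mean $\mutilde$ and variance $\sigmatilde^2$, then invoke $\expec{X^2}=\mutilde^2+\sigmatilde^2$ for $X\sim\gaussian{\mutilde}{\sigmatilde^2}$; I checked your two algebraic identities (the constant-term identity $(\nu^2+\sigma^2)(\sigma^2\xi^2+\nu^2\mu^2)-(\sigma^2\xi+\nu^2\mu)^2=\nu^2\sigma^2(\xi-\mu)^2$ and the common-denominator step for $\mutilde^2+\sigmatilde^2$) and both are right, as is the prefactor $\sigmatilde/\sigma=\nu/\sqrt{\nu^2+\sigma^2}$. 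What your approach buys is self-containedness and transparency: it needs no antiderivative to verify, only the normalization and second moment of a Gaussian, and it makes the appearance of $\mutilde$ and $\sigmatilde$ (which the paper uses throughout) conceptually clear rather than incidental. What the paper's approach buys is uniformity and an indefinite-integral version: the same template gives the antiderivative over arbitrary intervals $[q_{j-},q_{j+}]$, which is what the $0$th- and $1$st-order lemmas actually need elsewhere in the appendix, whereas your moment argument only handles the full-line integral (which is all this particular lemma claims, so no harm done here).
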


\begin{remark}
As a consequence of Lemma~\ref{lemma:gaussian-second-order}, it would be possible to further our analysis by adding second degree terms to~$f$. 
Indeed, quantities depending on $\norm{x_i-\xi}$, which would have to be computed to extend the proofs of Lemmas~\ref{lemma:computation-expected-response-vector} and~\ref{lemma:concentration-response}, can be computed with this lemma. 
For instance, one can show that
\[
\expec{\pi_i \norm{x_i-\xi}^2} = \cst \cdot \left[\frac{\nu^4}{(\nu^2+\sigma^2)^2}\norm{\xi-\mu}^2 + \frac{\nu^2\sigma^2\Dim}{\nu^2+\sigma^2}\right]
\, .
\]
%
\end{remark}

\begin{proof}
We use the fact that 
\[
\int x^2 \cdot \exps{-ax^2+bx+c} \diff x = \frac{\sqrt{\pi}(2a+b^2)}{8a^{5/2}} \exps{\frac{b^2}{4a}+c}\cdot \erfun{\frac{2ax-b}{2\sqrt{a}}} - \frac{ax+b}{4a^2} \cdot \exps{-ax^2+bx+c}
\, .
\]
\end{proof}

\subsection{Concentration results}

In this section we collect some concentration results used throughout our proofs. 
Note that we use rather use the two-sided version of these results. 

\begin{theorem}[Hoeffding's inequality]
\label{th:hoeffding}
Let $X_1,\ldots,X_n$ be independent random variables such that $X_i$ takes its values in $[a_i,b_i]$ almost surely for all $i\leq n$. 
Then for every $t>0$,
\[
\proba{\frac{1}{n}\sum_{i=1}^n (X_i-\expec{X_i}) \geq t} \leq \exp\left(\frac{-2t^2n^2}{\sum_{i=1}^n (b_i-a_i)^2}\right)
\, .
\]
\end{theorem}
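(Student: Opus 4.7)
The plan is to follow the classical Chernoff--Hoeffding argument. First, set $S_n \defeq \sum_{i=1}^n (X_i - \expec{X_i})$ and observe that, for any $s > 0$, Markov's inequality applied to the monotone function $x \mapsto \exps{sx}$ gives
\[
\proba{S_n \geq nt} = \proba{\exps{sS_n} \geq \exps{snt}} \leq \exps{-snt}\expec{\exps{sS_n}}.
\]
By independence of the $X_i$, the moment generating function factorizes as $\expec{\exps{sS_n}} = \prod_{i=1}^n \expec{\exps{s(X_i-\expec{X_i})}}$, reducing the problem to bounding the MGF of each centered bounded summand.

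The key auxiliary result to establish (Hoeffding's lemma) is that for a centered random variable $Y$ with $Y \in [a,b]$ a.s., one has $\expec{\exps{sY}} \leq \exps{s^2(b-a)^2/8}$. I would prove it by exploiting the convexity of $x \mapsto \exps{sx}$: for $x \in [a,b]$,
\[
\exps{sx} \leq \frac{b-x}{b-a}\exps{sa} + \frac{x-a}{b-a}\exps{sb}.
\]
Taking expectation and using $\expec{Y} = 0$ yields an upper bound that, after the substitution $u = s(b-a)$, can be written as $\exps{\varphi(u)}$ for an explicit smooth function $\varphi$. A direct computation shows $\varphi(0) = \varphi'(0) = 0$ and $\varphi''(u) = p(u)(1-p(u))$ for some $p(u) \in [0,1]$, hence $\varphi''(u) \leq 1/4$. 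Taylor's theorem with integral remainder then gives $\varphi(u) \leq u^2/8$, which is exactly the claimed MGF bound.

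Combining the two ingredients yields
\[
\proba{S_n \geq nt} \leq \exps{-snt + \tfrac{s^2}{8}\sum_{i=1}^n (b_i-a_i)^2}.
\]
Optimizing the right-hand side in $s > 0$ (minimizing a quadratic in $s$) at $s^\star = 4nt / \sum_i (b_i-a_i)^2$ gives precisely the claimed bound $\exps{-2n^2 t^2 / \sum_i (b_i-a_i)^2}$. The only delicate point of the proof is verifying the inequality $\varphi''(u) \leq 1/4$ in Hoeffding's lemma, which reduces to the elementary fact $p(1-p) \leq 1/4$ for $p \in [0,1]$; the Chernoff step, the factorization, and the final optimization over $s$ are entirely routine.
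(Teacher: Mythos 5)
Your proof is correct: the Chernoff bound, the factorization of the moment generating function by independence, Hoeffding's lemma via convexity and the bound $\varphi''(u)\leq 1/4$, and the optimization at $s^{\star}=4nt/\sum_i(b_i-a_i)^2$ all check out and yield exactly the stated exponent. The paper does not prove this result itself but simply cites Theorem~2.8 of \citet{Bou_Lug_Mas:2013}, whose proof is precisely the classical argument you give, so there is nothing to compare beyond noting that you have supplied the standard proof in full.
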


\begin{proof}
This is Theorem~2.8 in \citet{Bou_Lug_Mas:2013} in our notation.
\end{proof}

\begin{theorem}[Hoeffding's inequality for sub-Gaussian random variables]
\label{th:hoeffding-subgaussian}
Let $X_1,\ldots,X_n$ be independent random variables such that $X_i$ is sub-Gaussian with parameter $s^2>0$. 
Then, for every $t>0$,
\[
\proba{\frac{1}{n} \sum_{i=1}^n X_i - \expec{X_i} > t} \leq \exp\left(\frac{-nt^2}{2s^2}\right)
\, .
\]
\end{theorem}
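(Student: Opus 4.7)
The plan is to prove this by the classical Chernoff-bound / moment-generating-function approach, which is essentially the standard argument for any sub-Gaussian concentration inequality. Let $S_n \defeq \sum_{i=1}^n (X_i - \expec{X_i})$, so the event of interest is $\{S_n > nt\}$.

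First I would apply an exponential Markov inequality: for any $\lambda > 0$,
\[
\proba{S_n > nt} = \proba{\exps{\lambda S_n} > \exps{\lambda n t}} \leq \exps{-\lambda n t} \expec{\exps{\lambda S_n}}.
\]
Next, I would use independence of the $X_i$ to factorize the moment generating function:
\[
\expec{\exps{\lambda S_n}} = \prod_{i=1}^n \expec{\exps{\lambda (X_i - \expec{X_i})}}.
\]
Then I invoke the defining property of a sub-Gaussian random variable with parameter $s^2$, namely $\expec{\exps{\lambda (X_i - \expec{X_i})}} \leq \exps{\lambda^2 s^2 / 2}$ for all $\lambda \in \Reals$. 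Combining these two steps yields
\[
\proba{S_n > nt} \leq \exps{-\lambda n t + n \lambda^2 s^2 / 2}.
\]

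Finally, I would optimize the right-hand side in $\lambda > 0$. The exponent is a quadratic in $\lambda$ minimized at $\lambda^\star = t/s^2 > 0$, and plugging this in gives the exponent $-nt^2/s^2 + nt^2/(2s^2) = -nt^2/(2s^2)$, which is exactly the claimed bound. Dividing both sides of the original statement by $n$ (since $\{S_n > nt\} = \{n^{-1} \sum X_i - \expec{X_i} > t\}$) finishes the argument.

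There is no real obstacle here: the result is a textbook application of the Cram\'er--Chernoff method and every step is immediate once the sub-Gaussian hypothesis is used. The only point that requires a tiny bit of care is verifying that the optimizer $\lambda^\star = t/s^2$ is indeed positive (which it is, since $t > 0$ and $s^2 > 0$), so that the Markov step with $\lambda > 0$ is legitimate.
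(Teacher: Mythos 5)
Your proof is correct: it is the standard Cram\'er--Chernoff argument (exponential Markov inequality, factorization by independence, the sub-Gaussian MGF bound, and optimization at $\lambda^\star = t/s^2$), and the optimization of the exponent is carried out correctly. The paper does not prove this statement itself but simply cites Proposition~2.1 of Wainwright (2019), whose proof is exactly the argument you give, so your approach coincides with the one the paper relies on.
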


\begin{proof}
This is Proposition~2.1 in \citet{Wai:2019}.
\end{proof}

\end{document}